\documentclass{article}

\PassOptionsToPackage{numbers}{natbib}
\usepackage{natbib}
\usepackage[preprint]{neurips_2026}
\usepackage[utf8]{inputenc} 
\usepackage[T1]{fontenc}    
\usepackage{hyperref}       
\usepackage{url}            
\usepackage{booktabs}       
\usepackage{amsfonts}       
\usepackage{nicefrac}       
\usepackage{microtype}      
\usepackage{amsmath}
\usepackage{amsthm}
\usepackage{enumitem}
\usepackage{mathtools}
\usepackage{float}
\usepackage{wrapfig}
\usepackage{tabularx}
\usepackage{array}
\usepackage{makecell}
\usepackage{graphicx}
\usepackage{subcaption}
\usepackage{multirow}
\usepackage{algorithm}
\usepackage{algpseudocode}
\usepackage{adjustbox}
\usepackage[dvipsnames]{xcolor}
\usepackage{titletoc} 
\newtheorem{problem}{Problem}

\theoremstyle{plain}
\newtheorem{theorem}{Theorem}[section]

\theoremstyle{definition}

\theoremstyle{remark}

\newcommand{\ours}{TriGlue}

\title{TriGlue: a Biology-Inspired Generative Model for Generating Molecular Glue-Induced Ternary Complex}

\author{
 \textbf{Yuliang Yan\textsuperscript{1}},
 \textbf{Shuo Yan\textsuperscript{1}},
 \textbf{Haochun Tang\textsuperscript{2,1}},
 \textbf{Yiqin Sun}\textsuperscript{3},
 \textbf{Enyan Dai\textsuperscript{1}}
\\
 \textsuperscript{1}The Hong Kong University of Science and Technology (Guangzhou)\\
 \textsuperscript{2}Jilin University\\
 \textsuperscript{3}University of Sydney
\\
}

\begin{document}

\maketitle

\begin{abstract}
Molecular glue degraders have emerged as a promising strategy for targeted protein degradation by inducing ternary complex formation between an E3 ubiquitin ligase and a target protein. 
Despite their therapeutic potential, computational design of molecular glues remains largely unexplored. 
Unlike conventional structure-based drug design, molecular glue design is governed by the unknown protein-protein interface and requires the simultaneous modeling of ligand generation, protein-protein docking, and ternary complex assembly.
In this work, we formulate molecular glue design as a ternary complex generation problem and propose a biology-inspired generative framework, \textbf{\ours}. 
Motivated by the mechanism of molecular glue action, we decompose ternary complex generation into two coupled stages: interface estimation and interface-conditioned complex generation. 
First, we develop an SE(3)-equivariant interface estimation module that predicts a geometrically constrained protein-protein interface from unbound monomer structures. 
Second, we introduce an interface-conditioned ternary flow matching network that jointly generates the molecular glue and predicts the rigid-body transformation required to assemble the ternary complex. 
Extensive experiments demonstrate that \textbf{\ours} generates chemically valid molecules and produces plausible ternary complexes, which highlight the potential of biology-inspired generative modeling for accelerating molecular glue discovery. 
Our code is available at \url{https://github.com/yuliangyan0807/molecular-glue-design}.
\end{abstract}

\section{Introduction}\label{sec:intro}
 \begin{wrapfigure}{r}{0.54\linewidth}
    \vspace{-2.0em}
    \centering
    \includegraphics[width=\linewidth]{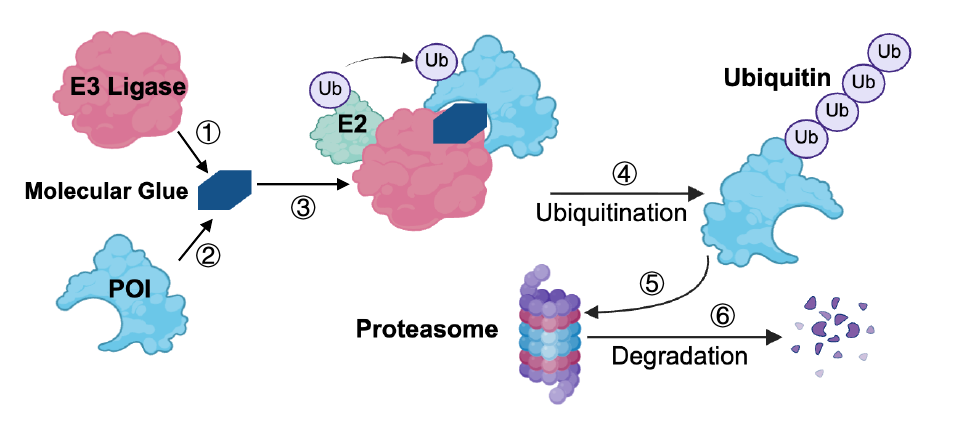}
    \vspace{-2.0em}
    \caption{MGDs in targeted protein degradation.
    }
    \label{fig:mgd&protac}
    \vspace{-1.0em}
\end{wrapfigure}

Recently, molecular glue degraders (MGDs) have emerged as a rapidly growing strategy in targeted protein degradation~\citep{mg1,mg4,mg5,mg2,mg3}. 
As illustrated in Fig.~\ref{fig:mgd&protac}, the mechanism of molecular glue degraders (MGDs) proceeds as follows: 
The molecular glue binds to the E3 ubiquitin ligase and remodels its interaction surface. 
This remodeled interface recruits and stabilizes the association of the target protein (POI), forming a ternary complex. 
Then, the E2 enzyme transfers ubiquitin molecules to the POI, resulting in ubiquitination, and the ubiquitinated POI is recognized by the proteasome and degraded, completing the targeted protein degradation process.
Molecular glues are typically structurally simple, drug-like, and increasingly clinically validated~\citep{mg9}. 
Therefore, molecular glue development has become a rapidly emerging focus in modern drug discovery~\citep{mg6,mg7,mg8}. 

Accelerating the discovery of molecular glues remains a central objective. 
Traditional molecular glue discovery has largely relied on expensive and labor-intensive experimental screening, and to date, only a few hundred molecular glues have been identified~\citep{mg10,mg12,mg11}. 
AI-based approaches have recently propelled drug discovery~\citep{targetdiff,decompdiff,uniguide,pocketxmol}. 
However, these methods are not suitable for molecular glue design, where ligand generation, protein docking, and ternary complex formulation must be considered simultaneously.
Fundamentally, they are designed for binary protein–ligand binding and typically assume the existence of a predefined binding pocket on a single protein. 
DeepTernary~\citep{deepternary} introduces an SE(3)-equivariant network for ternary complex structure prediction. 
However, it focuses on predicting structures for known ligands and does not address \textit{de novo} drug design. 
Therefore, we pursue generative drug design tailored to ternary molecular glue systems.

Molecular glue design is to generate a small molecule that induces an E3 ubiquitin ligase and a target protein to form a productive ternary complex. 
Achieving this goal requires the simultaneous optimization of three tightly coupled components: the molecule-induced E3 ligase--target protein interface, the novel small molecule, and the rigid-body transformation that positions the two proteins into a coherent ternary complex. 
This gives rise to two major challenges. 
\emph{First}, the success of molecular glue design is determined by the formation of a compatible ligand-mediated protein–protein interface, which ultimately dictates whether a stable ternary complex can form.
However, this interface is unknown prior to complex assembly and cannot be directly observed from the unbound proteins. 
\emph{Second}, molecular glue design requires interface-conditioned generative modeling that jointly produces a novel ligand and the rigid-body transformation aligning the two proteins simultaneously. 
This involves tightly coupled dependencies across three components: the ligand structure, the protein–protein docking pose, and the interface geometry that stabilizes their interaction. 

To address these challenges, we propose a biology-inspired generative framework named \textbf{\ours}.
(i) We develop an SE(3)-equivariant interface estimation module that predicts a protein–protein interface from the two monomer structures, providing a geometrically constrained and biologically meaningful structural prior.
(ii) We introduce an interface-conditioned ternary flow matching network that jointly generates the molecular glue and predicts the rigid-body rotation and translation required for protein–protein docking under the guidance of the estimated interface.
In summary, our main contributions are as follows:
\begin{itemize}[leftmargin=*]
    \item We investigate the novel problem of molecular glue design for ternary complexes.
    \item We propose a biology-inspired generative framework that decomposes ternary complex generation into interface estimation and interface-conditioned ternary flow generation.
    \item Extensive experiments demonstrate the effectiveness of the proposed framework in generating valid small molecules and plausible ternary complexes.
\end{itemize}
\section{Preliminaries}
\subsection{Molecular Glue-Induced Ternary Complex Formation}
\noindent \textbf{Mechanistic Overview.} Molecular glue drives ternary complex formation by binding to one protein and reshaping its surface to recruit a second partner, thereby inducing or stabilizing a protein–protein interaction. 
They remodel the interaction landscape and convert transient encounters into stable assemblies, rather than acting as simple binders. 
Molecular glues can modulate protein function, localization, or stability through cooperative interactions among the ligand and both proteins, which explains their growing importance in therapeutic discovery. 

\noindent\textbf{Molecular Glue-induced Protein-protein Interface.} Molecular glue activity is determined by the creation of a ligand-mediated protein–protein interface. 
The ligand occupies the interfacial region and stabilizes complementary surface patches from both proteins, effectively bridging them and reducing the tendency of the complex to dissociate. 
Therefore, the molecular glue-induced protein–protein interface serves as an important biological prior to form the ternary complex.

\noindent\textbf{Dynamic Ternary Complex Assembly.} 
Ternary complex formation is inherently dynamic. 
The two proteins typically start from unbound conformations and undergo rigid-body rearrangements as ligand binding stabilizes the complex. 
The final ternary structure therefore emerges from a coupled process involving ligand placement, protein docking, and cooperative stabilization. 
This dynamic view naturally motivates a generative formulation that jointly models ligand structure and protein motion, aligning the learning objective with the underlying biological mechanism.
\subsection{Problem Formulation}
Formally, we formulate molecular-glue-induced ternary complex design as a generative modeling task.
Specifically, a ternary complex is defined as $\mathcal{C} = (\mathcal{R}, \mathcal{M}, \mathcal{T})$, where $\mathcal{R}$ and $\mathcal{T}$ denote the receptor and target proteins, and $\mathcal{M}$ is a small molecule that induces their association.
Each protein is represented as a residue-level graph $\mathcal{G}=(\mathcal{V},\mathcal{E})$, where node $i$ corresponds to residue identity $\mathrm{AA}_i$ with C$\alpha$ coordinate $\mathbf{X}_i\in\mathbb{R}^3$, and edges are constructed using a $k$-nearest-neighbor scheme.
The ligand is represented as a molecular graph $\mathcal{G}_{\mathcal{M}}=(\mathcal{V}_{\mathcal{M}},\mathcal{E}_{\mathcal{M}})$, where each node corresponds to an atom type $\mathbf{a}_i$ with coordinate $\mathbf{x}_i\in\mathbb{R}^3$.
During generation, the bound ternary complex is unknown. We are given a receptor protein $\mathcal{R}$ in its bound conformation, an unbound target protein $\tilde{\mathcal{T}}$ with unknown pose, and a ligand $\mathcal{M}$ to be generated. 
The bound target structure is related to the unbound structure by a closed-form rigid-body transformation:
\begin{equation}\label{eq:rigid}
\mathbf{X}_{\mathcal{T}} = \mathbf{R}\mathbf{X}_{\tilde{\mathcal{T}}} + \mathbf{t},
\qquad
\mathbf{R} \in \mathrm{SO}(3),\;\mathbf{t} \in \mathbb{R}^3.
\end{equation}
\begin{problem}\label{pro}
Given a bound receptor protein $\mathcal{R}$ and an unbound target protein $\tilde{\mathcal{T}}$, molecular glue-induced ternary complex design is formulated as learning the conditional distribution:
\begin{equation}\label{eq:objective}
p(\mathcal{C} \mid \mathcal{R}, \tilde{\mathcal{T}}).
\end{equation}
\end{problem}

\begin{figure}[H]
    \centering
    \includegraphics[width=0.92\linewidth]{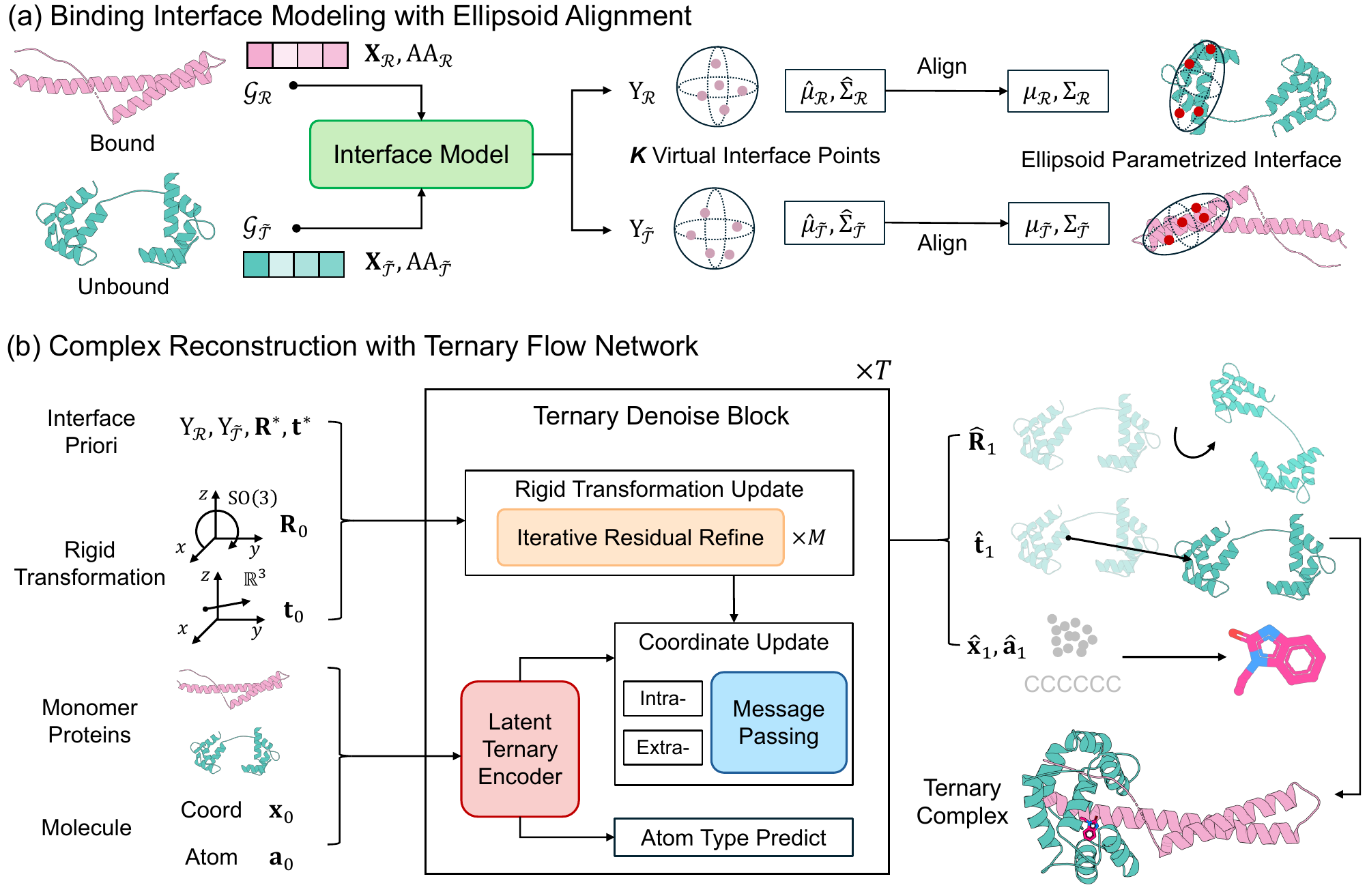}
    \caption{Framework of our interface-conditioned ternary flow network TriGlue.
    \vspace{-1.0em}
}
    \label{fig:overview}
\end{figure}

\section{Method}
Following the biological process of molecular glue-induced ternary complex formation, we formulate {\ours} as a biology-inspired generative framework. 
As shown in Fig.~\ref{fig:overview}, we factorize ternary complex generation into two coupled modules: an interface estimation module that infers a global virtual interface from two monomer proteins, and an interface-conditioned complex generation module that jointly generates the molecular glue and estimates the inter-protein rigid-body transformation to assemble a coherent ternary complex. 
This factorization aligns the generative process with the biological mechanism where a glue-mediated protein--protein interface guides subsequent ternary complex assembly.
The factorized design raises two key challenges. 
(i) the inferred interface should be constrained in a geometrically meaningful manner to ensure its utility in ternary complex modeling; 
(ii) the predicted interface should effectively guide a unified generative process that simultaneously produces a novel molecular glue and drives the rigid-body assembly of the two proteins. 
To address these challenges, we constrain the estimated interface with an ellipsoid-parameterized geometry. 
We further inject interface information through an interface-conditioned ternary flow network. 
We next describe the factorization process and the two corresponding modules in detail. 

\subsection{Biology-Inspired Factorization of Ternary Complex Generation}
Inspired by the mechanism of molecular glue-induced ternary complex formation, we decompose the Problem~\ref{pro} into two sub-goals: interface estimation and interface-conditioned complex generation. 
Specifically, this decomposition leads to two coupled probabilistic components: one estimates the molecular glue-induced interface between the two proteins, and the other generates the ternary complex conditioned on the inferred interface as follows:
\begin{equation}\label{eq:objective_factorized}
p(\mathcal{C} \mid \mathcal{R}, \tilde{\mathcal{T}})
=
\underbrace{p(\mathcal{I}_{\mathcal{R}}, \mathcal{I}_{\tilde{\mathcal{T}}}
\mid \mathcal{R}, \tilde{\mathcal{T}})}_{\text{Interface Estimation}}
 \cdot 
 \underbrace{p\big((\mathcal{R},\mathcal{M},\mathcal{T})
\mid \mathcal{I}_{\mathcal{R}}, \mathcal{I}_{\tilde{\mathcal{T}}},
\mathcal{R}, \tilde{\mathcal{T}}\big)}_{\text{Interface-Conditioned Complex Generation}}
,
\end{equation}
where $\mathcal{I}_{\mathcal{R}}$ and $\mathcal{I}_{\tilde{\mathcal{T}}}$ denote the sets of interface residues, whose C$\alpha$ atoms lie within $8\mathrm{\mathring{A}}$ of the binding partner in the complex. 
This biology-inspired factorization brings two key benefits: (i) It faithfully matches the mechanism of molecular-glue action; (ii) It transforms the challenging joint modeling problem into two tractable sub-tasks. 
We next present our approach for modeling both components.

\subsection{Interface Estimation under Ellipsoidal Geometric Constraints}\label{sec:interface_predict}
Protein–protein interface governs the process of molecular glue design, yet such information is unavailable when only monomer structures are given.  
This absence prevents the direct incorporation of biologically grounded geometric constraints during generation. 
We introduce an interface estimation module that infers a \emph{Global Virtual Interface} from the two monomer structures. 
We then align this virtual interface with an \emph{Ellipsoid Parameterized Representation} of the physical protein–protein interface. 
In the following, we present the detailed formulation of this module. 

\noindent \textbf{Protein Encoder}. 
Characterizing the interface first requires encoding each monomer protein to obtain latent representations. 
Given two proteins $\mathcal{G}_{\mathcal{R}}$ and $\mathcal{G}_{\tilde{\mathcal{T}}}$, we encode them with EGNN~\citep{egnn} to obtain the invariant features $\mathbf{F}_{\mathcal{R}} \in \mathbb{R}^{d \times N},\mathbf{F}_{\tilde{\mathcal{T}}} \in \mathbb{R}^{d \times M}$ and SE(3)-equivariant coordinates $\mathbf{Z}_{\mathcal{R}} \in \mathbb{R}^{3 \times N},\mathbf{Z}_{\tilde{\mathcal{T}}} \in \mathbb{R}^{3 \times M}$ as follows:
\begin{equation}   \mathbf{F}_{\mathcal{R}},\mathbf{Z}_{\mathcal{R}},\mathbf{F}_{\tilde{\mathcal{T}}},\mathbf{Z}_{\tilde{\mathcal{T}}}=\mathrm{EGNN}(\mathcal{G}_{\mathcal{R}},\mathcal{G}_{\tilde{\mathcal{T}}}),
\end{equation}
where $N, M$ is the node (residue) number and $d$ is the hidden dimension.

\noindent \textbf{Global Virtual Interface}.
To capture protein–protein interaction regions in a compact, geometry-aware manner, we map each protein to a fixed set of $K$ \emph{virtual interface points}~\citep{equidock}, denoted as $\mathbf{Y}_{\mathcal{R}}, \mathbf{Y}_{\tilde{\mathcal{T}}} \in \mathbb{R}^{3 \times K}$. 
The interface modeling in Eq.~(\ref{eq:objective_factorized}) is further approximated as: $p(\mathbf{Y}_{\mathcal{R}}, \mathbf{Y}_{\tilde{\mathcal{T}}}
\mid \mathcal{R}, \tilde{\mathcal{T}})$. 
Specifically, we represent each virtual point as an attention-weighted aggregation of coordinates:
\begin{equation}
\mathbf{y}_{\mathcal{R},k}
=
\sum_{i=1}^{N}
\alpha_{\mathcal{R},i}^{k}
\, \mathbf{z}_{\mathcal{R},i},
\quad
\mathbf{y}_{\tilde{\mathcal{T}},k}
=
\sum_{j=1}^{M}
\alpha_{\tilde{\mathcal{T}},j}^{k}
\, \mathbf{z}_{\tilde{\mathcal{T}},j},
\end{equation}
where the attention weights are computed via cross-protein feature attention:
\begin{equation}
\alpha_{\mathcal{R},i}^{k}
=
\mathrm{softmax}
\left(
\frac{1}{\sqrt{d}}\,
\mathbf{f}_{\mathcal{R},i}^{\top}
\mathbf{W}_{\mathcal{R}}^{k}
\, \phi(\mathbf{F}_{\tilde{\mathcal{T}}})
\right),
\quad
\alpha_{\tilde{\mathcal{T}},j}^{k}
=
\mathrm{softmax}
\left(
\frac{1}{\sqrt{d}}\,
\mathbf{f}_{\tilde{\mathcal{T}},j}^{\top}
\mathbf{W}_{\tilde{\mathcal{T}}}^{k}
\, \phi(\mathbf{F}_{\mathcal{R}})
\right).
\end{equation}
Here, $\mathbf{y}_{\mathcal{R},k}$ denotes the $k$-th virtual point of protein $\mathcal{R}$, 
$\mathbf{f}_{\mathcal{R},i}$ and $\mathbf{z}_{\mathcal{R},i}$ denote the feature and coordinate of residue $i$, 
$\mathbf{W}_{\mathcal{R}}^{k} \in \mathbb{R}^{d \times d}$ is a learnable projection matrix for head $k$, 
and $\phi(\cdot)$ is an average pooling operator. 
The same formulation is applied symmetrically to the protein $\tilde{\mathcal{T}}$. 
Our global virtual interface summarizes a soft surface region, 
yielding a unified interface representation. 

\noindent \textbf{Ellipsoid Parametrized Interface Alignment}. 
The underlying interaction region is expected to satisfy finer-grained geometric priors that capture its precise position and directional spread in 3D space. 
We introduce an ellipsoid-parametrized representation of the ground-truth interface and encourage the global virtual interface $\mathbf{Y}$ to reflect the geometric structure. 
Specifically, given the interface residue set $\mathcal{I}$, we summarize its spatial distribution using a Gaussian ellipsoid parameterized by a centroid $\boldsymbol{\mu}$ and a covariance matrix $\boldsymbol{\Sigma}$, which capture the location and anisotropic extent of the interaction region, respectively. 
More details are included in Appendix~\ref{app:ellip}.

To parameterize the geometry of the virtual interface, we compute the empirical mean and covariance of the predicted virtual points as follows:
\begin{equation}
\hat{\boldsymbol{\mu}}
=
\frac{1}{K}
\sum_{k=1}^{K} \mathbf{y}_k,
\qquad
\hat{\boldsymbol{\Sigma}}
=
\frac{1}{K}
\sum_{k=1}^{K}
(\mathbf{y}_k - \hat{\boldsymbol{\mu}})
(\mathbf{y}_k - \hat{\boldsymbol{\mu}})^{\top}.
\end{equation}
We align the virtual interface with the ground-truth ellipsoid by minimizing the discrepancy between their first- and second-order moments:
\begin{equation}
\mathcal{L}_{\mathrm{ellip}}
=
\|\hat{\boldsymbol{\mu}} - \boldsymbol{\mu}\|_2^2
+
\|\hat{\boldsymbol{\Sigma}} - \boldsymbol{\Sigma}\|_F^2 .
\end{equation}
The ellipsoid-parameterized interface provides conditioning information for subsequent generative design. 
We provide the proof of the equivariance in Appendix~\ref{app:proof_virtual_interface}.

\subsection{Interface-Conditioned Generation with Ternary Flow Network}
Ternary complex formation requires jointly determining the molecular glue structure and the rigid-body docking of the two proteins. 
Effectively integrating the learned interface prior into the generative process remains non-trivial. 
We therefore introduce an interface-conditioned generative framework that jointly models molecular conformation and protein–protein docking within a unified multi- modality flow matching formulation.
Formally, conditioned on the learned interface, the complex generation distribution is further factorized as three independent components:
\begin{align}
p\big((\mathcal{R},\mathcal{M},\mathcal{T})
\mid \underbrace{\mathcal{I}_{\mathcal{M}}, \mathcal{I}_{\tilde{\mathcal{T}}},
\mathcal{R}, \tilde{\mathcal{T}}}_{\text{condition} \coloneq \star}\big) 
&= 
p\big(\mathcal{M} \mid \mathcal{I}_{\mathcal{R}}, \mathcal{I}_{\tilde{\mathcal{T}}},
\mathcal{R}, \tilde{\mathcal{T}}\big)
\cdot
p\big(\mathcal{T} \mid \mathcal{I}_{\mathcal{R}}, \mathcal{I}_{\tilde{\mathcal{T}}},
\mathcal{R}, \tilde{\mathcal{T}}\big) \notag \\
&=
p(\mathbf{x} \mid \star)
\cdot
p(\mathbf{a} \mid \star)
\cdot
p(\mathbf{R},\mathbf{t} \mid \star).
\end{align}
Generally, three flows are modeled, each equipped with a specialized architecture: 
(i) \emph{Rigid Transformation Flow} models $p(\mathbf{R},\mathbf{t} \mid \star)$ and predicts the rigid-body rotation and translation that place the target protein into its ternary-complex pose.
(ii) \emph{Coordinate Flow} models $p(\mathbf{x} \mid \star)$ and generates the three-dimensional coordinates of ligand atoms.
(iii) \emph{Atom Type Flow} models $p(\mathbf{a} \mid \star)$ and predicts the chemical identities of ligand atoms. 
Details of these components are described next.

\noindent \textbf{Rigid Transformation Flow for $p(\mathbf{R},\mathbf{t} \mid \star)$}. 
Predicting the rigid-body transformation that restores the target protein pose is highly dependent on interface geometry, making direct estimation challenging.
We therefore use the global virtual interfaces to obtain an initial docking transformation and further refine it through an iterative module to approximate the final rigid-body configuration. 

Conversely, we first utilize our interface model to generate the virtual interface $\mathbf{Y}_{\mathcal{R}}$ and $\mathbf{Y}_{\tilde{\mathcal{T}}}$.
Then, we predict the \emph{first-order} rigid transformation with the \emph{Differentiable Kabsch Model}~\citep{equidock}. 
We form the cross-covariance matrix as $\mathbf{A}=\mathbf{Y}_{\mathcal{\tilde{\mathcal{T}}}} \mathbf{Y}_{\mathcal{R}}$, and decompose it with single value decomposition as $\mathbf{A}=\mathbf{U}_{2}\mathbf{S}\mathbf{U}_{1}^{\top}$. 
The resulting rotation $\mathbf{R}^{*}$ and translation $\mathbf{t}^{*}$ are given by:
\begin{equation}
    \mathbf{R}^{*} = \mathbf{U}_{2} \begin{pmatrix}
1 & 0 & 0 \\
0 & 1 & 0 \\
0 & 0 & d
\end{pmatrix} \mathbf{U}_{1}^{\top}, \qquad
\mathbf{t}^{*}= \hat{\mu}_{\tilde{\mathcal{T}}}-\mathbf{R}^{*} \hat{\mu}_{\mathcal{R}}, \qquad
d=\mathrm{sign}(\mathrm{det}(\mathbf{U}_{2}\mathbf{U}_{1}^{\top})).
\end{equation}
The first-order transformation $(\mathbf{R}^{*},\mathbf{t}^{*})$ and the virtual interface $\mathbf{Y}_{\tilde{\mathcal{T}}}$ are subsequently refined through an iterative Rigid Transformation (RT) block $\Phi_{\text{RT}}(\cdot)$. 
At each iteration, the $i$th layer of the module predicts \emph{residual updates} in the Lie algebra space and translation, conditioned on the current transformation, noised input, and the two virtual interfaces. The refinement proceeds as:
\begin{align}\label{eq:interface1}
     \Delta\omega,\Delta\mathbf{t}&=
    \Phi_{\text{RT}}^{(i)}(\mathbf{R}_{t},\mathbf{t}_{t},\textcolor{RedOrange}{\mathbf{R}^{*}},\textcolor{RedOrange}{\mathbf{t}^{*}},\textcolor{RedOrange}{\mathbf{Y}_{\mathcal{R}}},\textcolor{RedOrange}{\mathbf{Y}_{\mathcal{\tilde{\mathcal{T}}}}}),\\
     \mathbf{R}^{*} &= \exp([\Delta\omega\times])\mathbf{R}^{*},\\
     \mathbf{t}^{*} &= \mathbf{t}^{*} + \Delta\mathbf{t},\\
     \mathbf{Y}_{\mathcal{\tilde{\mathcal{T}}}} &= \mathbf{R}^{*}\mathbf{Y}_{\mathcal{\tilde{\mathcal{T}}}} + \mathbf{t}^{*},
\end{align}
where $\Delta\omega$ is a predicted tangent vector in the tangent space $\mathfrak{so}_3$, and $\exp(\cdot)$ denotes the exponential map from $\mathfrak{so}_3$ to $\mathrm{SO}(3)$. 
The noisy input $(\mathbf{R}_t,\mathbf{t}_t)$ is obtained from two conditional flows: 
(i) For rotations, given noise $\mathbf{R}_0 \sim U(\mathrm{SO}(3))$ and the target rotation $\mathbf{R}_1 \sim p(\mathbf{R}\mid\star)$, the Riemannian conditional flow~\citep{manifoldflowmatching} is defined as the geodesic interpolation between the $\mathbf{R}_0$ and $\mathbf{R}_1$:
\begin{equation}
\mathbf{R}_t=\psi_{t}(\mathbf{R}_{0} \mid \mathbf{R}_{1})=\exp_{\mathbf{R}_0}\big(t\cdot \log_{\mathbf{R}_0}(\mathbf{R}_1)\big),
\end{equation}
where $\exp$ and $\log$ denote the Riemannian exponential and logarithm maps on $\mathrm{SO}(3)$. 
$t$ is the timestep. 
(ii) For translations, given noise $\mathbf{t}_0 \sim \mathcal{N}(\mathbf{0},\sigma^2\mathbf{I})$ and ground-truth $\mathbf{t}_1 \sim p(\mathbf{t}\mid\star)$, the Euclidean conditional flow~\citep{euclideanflow} is defined as a linear interpolation as follows:
\begin{equation}
\mathbf{t}_t =\psi_{t}(\mathbf{t}_{0} \mid \mathbf{t}_{1})= t\cdot\mathbf{t}_1 + (1-t)\cdot\mathbf{t}_{0}.
\end{equation}
The conditional flow matching is utilized to train the model, and the objective is formulated as:
\begin{equation}
    \mathcal{L}_{\text{RT}}=\mathbb{E}_{t \sim \mathcal{U}(0,1)}
    \big[
    \| \left(\log_{\mathbf R_t}(\hat{\mathbf R}_1) - \log_{\mathbf R_t}(\mathbf{R}_1)\right) /
(1-t) \|_2^2
    +
    \|
    \hat{\mathbf{t}}_{1}-\mathbf{t}_{1}
    \|_2^2
    \big].
\end{equation}
where $(\hat{\mathbf{R}}_{1},\hat{\mathbf{t}}_{1})$ is the final output prediction of $\Phi_{\text{RT}}(\cdot)$. 

\noindent \textbf{Latent Ternary Encoder}. 
Modeling ternary complexes requires representations that remain invariant to input structures while placing all components in a unified latent space to capture their interactions.  
To be specific, given the input $\mathcal{G}_{\mathcal{R}}$, $\mathcal{G}_{\tilde{\mathcal{T}}}$, and $\mathcal{G}_{\tilde{\mathcal{M}}}$ (noised graph from $\mathcal{G}_{\mathcal{M}}$), we encode them with the \emph{Invariant Point Attention} block~\citep{alphafold}:
\begin{equation}
    \mathbf{H}_{\mathcal{R}}, \mathbf{H}_{\tilde{\mathcal{T}}},\mathbf{h}_{\tilde{\mathcal{M}}}=
    \mathrm{IPA}(\mathcal{G}_{\mathcal{R}},\mathcal{G}_{\tilde{\mathcal{T}}},\mathcal{G}_{\tilde{\mathcal{M}}}),
\end{equation}
where $\mathbf{H}_{\mathcal{R}}, \mathbf{H}_{\tilde{\mathcal{T}}},~ \text{and}~\mathbf{h}_{\tilde{\mathcal{M}}}$ denote invariant representations that encode both sequence and structural information, which are subsequently used by the denoising networks. 

\noindent \textbf{Coordinate Flow for $p(\mathbf{x} \mid \star)$}. 
Generating ligand atomic coordinates from random noise is particularly challenging when the binding pocket is unknown. 
We first leverage the predicted rigid transformation to pre-align the target protein, providing a geometry that is consistent with the inferred interface. 
Then, we integrate the unified latent representations with the current coordinates and design two complementary message passing: one capturing interactions between the proteins and the ligand, and the other modeling intra-ligand geometric dependencies, to progressively update coordinates. 

Specifically, the first-order transformation is utilized to correct the coordinate of the target protein: 
\begin{equation}\label{eq:interface2}
    \underbrace{\hat{\mathbf{X}}_{\mathcal{T}} = \textcolor{RedOrange}{\mathbf{R}^{*}}\mathbf{X}_{\tilde{\mathcal{T}}} + \textcolor{RedOrange}{\mathbf{t}^{*}}}_{\text{Interface-Correction}}.
\end{equation}
Then, a bi-level message passing network is constructed to denoise the molecular coordinates. 
Without loss of generality, for each $\mathbf{x}_{i} \in \mathbf{x}$, we adopt the linear interpolation in Euclidean space to obtain the noised coordinate as: $\mathbf{x}_{i,t}=t\mathbf{x}_{i,1}+(1-t)\mathbf{x}_{i,0}$, where $\mathbf{x}_{i,0} \sim \mathcal{N}(0,I_{3})$ is the prior and $\mathbf{x}_{i,1} \sim p(\mathbf{x}_{i})$ is the data distribution. 
Then, we update the coordinate via the intrinsic self-update and the extrinsic protein-molecule update with the Coordinate block 
$\Phi_{\mathbf{x}}(\cdot)$:
\begin{align}
\hat{\mathbf{x}}_{i,1} &= 
\mathbf{x}_{i,t}
+
\underbrace{
\sum_{j\in \mathcal{V}_{I}}
(\mathbf{x}_{i,t} - \mathbf{x}_{j,t})
\phi_{\text{intra}}
\big(
\mathbf{h}_{i}, \mathbf{h}_{j},
\|\mathbf{x}_{i,t} - \mathbf{x}_{j,t}\|_2^2,
t
\big)
}_{\text{Intrinsic Update}} \nonumber \\ 
&\quad +
\underbrace{
\sum_{k \in \mathcal{V}_{E}}
(\mathbf{x}_{i,t} - \mathbf{X}_k)\,
\phi_\text{extra}
\big(
\mathbf{h}_{i}, \mathbf{H}_{k},
\|\mathbf{x}_{i,t} - \mathbf{X}_k\|_2^2,
t
\big)
}_{\text{Extrinsic Update}},
\end{align}
where $\phi(\cdot)$ is MLP.  $\mathcal{V}_{I}$ and $\mathcal{V}_{E}$ denote the sets of neighbors for node $i$ corresponding to intra-molecular connections within the ligand and inter-molecular interactions with the two proteins, respectively. 
The $\Phi_{\mathbf{x}}$ block outputs the coordinate $\hat{\mathbf{x}}_1$, and the objective is formulated as follows:
\begin{equation}
    \mathcal{L}_{\text{coor}}=
    \mathbb{E}_{t \sim \mathcal{U}(0,1)}
    \big[
    \|
    \hat{\mathbf{x}}_{i,1}-\mathbf{x}_{i,1}
    \|_{2}^{2}
    \big].
\end{equation}

\noindent \textbf{Atom Type Flow for $p(\mathbf{a} \mid \star)$}. 
Recovering ligand atom identities is formulated as a continuous flow on the probability simplex. 
Each atom's identity $\mathbf{a}_i \in \mathbf{a}$ is mapped to a scaled soft one-hot logit vector~\citep{pepflow}, which lies on the probability simplex $\Delta^{|\mathcal{A}|-1}$, where $|\mathcal{A}|$ denotes the total number of atom types. 
The conditional flow is defined as a linear interpolation between the prior $\mathbf{a}_{i,0} \sim \mathcal{N}(\mathbf{0}, K^2 I_{\mid\mathcal{A}\mid})$ and the target $\mathbf{a}_{i,1}$:
$ \boldsymbol{\psi}_{t}(\mathbf{a}_{i,0} \mid \mathbf{a}_{i,1}) = t \mathbf{a}_{i,1} + (1 - t) \mathbf{a}_{i,0}$, and $K$ is a scale constant. 
The atom type is predicted through the Atom Type block $\Phi_{\mathbf{a}}(\cdot)$ and the training objective is:
{
\begin{equation}
\hat{\mathbf{a}}_1 = \Phi_{\mathbf{a}}
(
\mathbf{H}_{\mathcal{R}},
\mathbf{H}_{\tilde{\mathcal{T}}},
\mathbf{H}_{\tilde{\mathcal{M}}},
\mathbf{a}_t,
t
), \quad
    \mathcal{L}_{\text{type}}=
    \mathbb{E}_{t \sim \mathcal{U}(0,1)}
    \big[
    \|
    \hat{\mathbf{a}}_{1}-\mathbf{a}_{1}
    \|_{2}^{2}
    \big].
\end{equation}
}

\noindent \textbf{Training Objective}.
Finally, we balance the gradients of all flows using weighting coefficients $\lambda_{i}$ and jointly train the entire ternary denoising framework as follows: 
\begin{equation}
    \mathcal{L}_{\text{flow}}=(\lambda_1,\lambda_2)\cdot \mathcal{L}_{\text{RT}} +\lambda_3\cdot\mathcal{L}_{\text{coor}}+\lambda_4 \cdot \mathcal{L}_{\text{type}}.
\end{equation}
We provide the details of the model architecture and the flow matching in the Appendix~\ref{app:architecture} and \ref{app:flow_matching}.

\noindent \textbf{Sampling}. We provide the detailed sampling algorithm in the Appendix~\ref{app:sample}.
\section{Experiment}
In this section, we conduct experiments to answer the following research questions:
\begin{itemize}[leftmargin=*]
    \item \textbf{RQ1: }Can {\ours} generate high-quality molecular glues while accurately docking the two proteins into a reasonable ternary complex?
    \item \textbf{RQ2: }How accurate and geometrically meaningful are the ellipsoid interfaces predicted by {\ours}?
    \item  \textbf{RQ3: }How does interface conditioning affect the performance of {\ours}?
\end{itemize}

\subsection{Experimental Setup}\label{sec:experiment_setting}
\noindent \textbf{Dataset}. 
We utilize \textbf{TernaryDB}~\citep{deepternary} for training and evaluation, which contains $22,303$ ternary complexes curated from the Protein Data Bank~\citep{pdb}, covering diverse proteins and drug-like ligands. 
Moreover, to prevent data leakage, complexes are clustered by protein sequence similarity using MMseqs2~\citep{mmseqs2}, and clusters overlapping with known molecular glue complexes are excluded from training to construct non-overlapping validation and test splits. 
Detailed statistics are in Appendix~\ref{app:dataset_details}

\noindent \textbf{Baselines}. 
Since no existing method jointly performs \emph{de novo} molecular glue generation and ternary complex reconstruction, we evaluate representative baselines separately for each sub-task. 
For molecular generation, we compare with three structure-based design methods, \textbf{AR}~\citep{ar}, \textbf{TargetDiff}~\citep{targetdiff}, and \textbf{PocketXMol}~\citep{pocketxmol}. 
For protein-protein docking, we adopt \textbf{DeepTernary}~\citep{deepternary}, \textbf{EquiDock}~\citep{equidock}, and \textbf{DiffDock-PP}~\citep{diffdockpp}. 
Detailed baseline descriptions are provided in Appendix~\ref{app:baseline}.

\noindent \textbf{Training Details}. 
For the interface modeling module, the EGNN encoder consists of 4 layers with a hidden dimension of 128. 
The model is trained using the AdamW optimizer~\citep{adamw} with an initial learning rate of $5\times10^{-4}$ and a cosine annealing schedule for 1200 epochs. 
For the ternary denoising network, we stack two IPA blocks as the encoder with a hidden dimension of 128. 
Training is performed using the AdamW optimizer with a linear learning rate decay from $5\times10^{-4}$ to $4\times10^{-4}$ for 800 epochs. 
The loss weights for the joint training objective are set to $\{0.008, 1.0, 1.0, 10.0\}$.
All experiments are conducted on 8 NVIDIA L40 GPUs. 
Appendix~\ref{app:train_details} provides more details.

\noindent \textbf{Evaluation Metrics}. 
We evaluate performance from two perspectives. 
For molecular generation, we report AutoDock Vina-based affinity metrics (\textbf{Vina}, \textbf{Vina min}, \textbf{Vina dock})~\citep{autodock}, the high-affinity ratio (\textbf{Aff}), and molecular property metrics (\textbf{QED}, \textbf{SA}). 
For ternary-complex reconstruction, we measure ligand pose accuracy using \textbf{RMSD} and protein docking quality using \textbf{DockQ}~\citep{dockq} and its success rate (DockQ $>0.23$). 
We provide more details in Appendix~\ref{app:metric}. 

\begin{table}[H]
  \centering
  \small
  \setlength{\tabcolsep}{3.5pt}
  \renewcommand{\arraystretch}{1.05}
  \caption{Performance comparison on generated molecules.}
  \label{tab:main_table}

  \begin{tabular}{lcccccccccccc}
    \toprule
    \multirow{2}{*}{\textbf{Method}}
      & \textbf{RMSD}~($\downarrow$)
      & \multicolumn{2}{c}{\textbf{Vina score~($\downarrow$})}
      & \multicolumn{2}{c}{\textbf{Vina min~($\downarrow$})}
      & \multicolumn{2}{c}{\textbf{Vina dock~($\downarrow$})}
      & \multicolumn{2}{c}{\textbf{QED~($\uparrow$})}
      & \multicolumn{2}{c}{\textbf{SA~($\uparrow$)}}
      & \textbf{Aff~($\uparrow$)} \\

    \cmidrule(lr){2-2}
    \cmidrule(lr){3-4}
    \cmidrule(lr){5-6}
    \cmidrule(lr){7-8}
    \cmidrule(lr){9-10}
    \cmidrule(lr){11-12}
    \cmidrule(lr){13-13}

      & Avg
      & Avg & Med
      & Avg & Med
      & Avg & Med
      & Avg & Med
      & Avg & Med
      & Avg \\

    \midrule
    AR & 8.16$_{2.17}$ & 8.76 & 9.04 & 3.62 & 3.69 & -4.39 & -4.87 & 0.51 & 0.54 & 0.54 & 0.53 & 0.53 \\
    TargetDiff & 6.27$_{1.99}$ & -2.11 & -6.09 & -4.83 & -6.74 & -7.55 & -7.86 & 0.47 & 0.53 & 0.55 & 0.58 & 0.65 \\
    PocketXMol & 6.74 $_{0.91}$  & -6.17 & -6.14 & \textcolor{RubineRed}{-6.88} & -6.88 & -7.69 & -7.65 & 0.47 & 0.49 & \textcolor{RubineRed}{0.79} & \textcolor{RubineRed}{0.78} & 0.69 \\
    {\ours} & \textcolor{RubineRed}{5.22 $_{1.41}$ } & \textcolor{RubineRed}{-6.23} & \textcolor{RubineRed}{-6.79} & {-6.77} & \textcolor{RubineRed}{-7.37} & \textcolor{RubineRed}{-8.28} & \textcolor{RubineRed}{-8.85} & \textcolor{RubineRed}{0.67} & \textcolor{RubineRed}{0.77} & 0.58 & 0.66 & \textcolor{RubineRed}{0.79} \\
    \bottomrule
  \end{tabular}
\end{table}

\subsection{Ternary Complex Generation}
To answer \textbf{RQ1}, the evaluation is conducted from two complementary perspectives: 
\begin{itemize}[leftmargin=*]
    \item \textbf{The quality and placement of the generated molecules.} Here, all baselines are provided with pocket information and use their best sampling strategies. 
    As shown in Tab.~\ref{tab:main_table}, even without access to binding pockets, {\ours} achieves the most accurate ligand placement in the reconstructed complexes, yielding the lowest average RMSD (5.22).
    Meanwhile, {\ours} generates ligands with strong predicted binding affinity: all affinity-related metrics outperform the baselines except for Vina min, which is slightly lower than PocketXMol. 
    In terms of molecular properties, {\ours} produces more drug-like molecules (QED 0.67), while the SA score is worse than PocketXMol. 
    We attribute this to PocketXMol being trained on a substantially larger dataset ($\sim10^{8}$ molecules), which provides broader coverage of chemical space.
    \item \textbf{The docking accuracy of the unbound target protein.} DeepTernary predicts rigid transformations from complex sequences, whereas DiffDock and EquiDock take the two protein structures as input. 
    As shown in Fig.~\ref{fig:dockq},
    compared with DeepTernary, {\ours} achieves better overall performance with a higher acceptance rate (35.2\% vs. 30.8\%). Although DeepTernary attains a higher best-case performance, its results exhibit a heavier long-tail distribution. 
\end{itemize}

\begin{figure}[H]
    \centering    \includegraphics[width=0.90\linewidth]{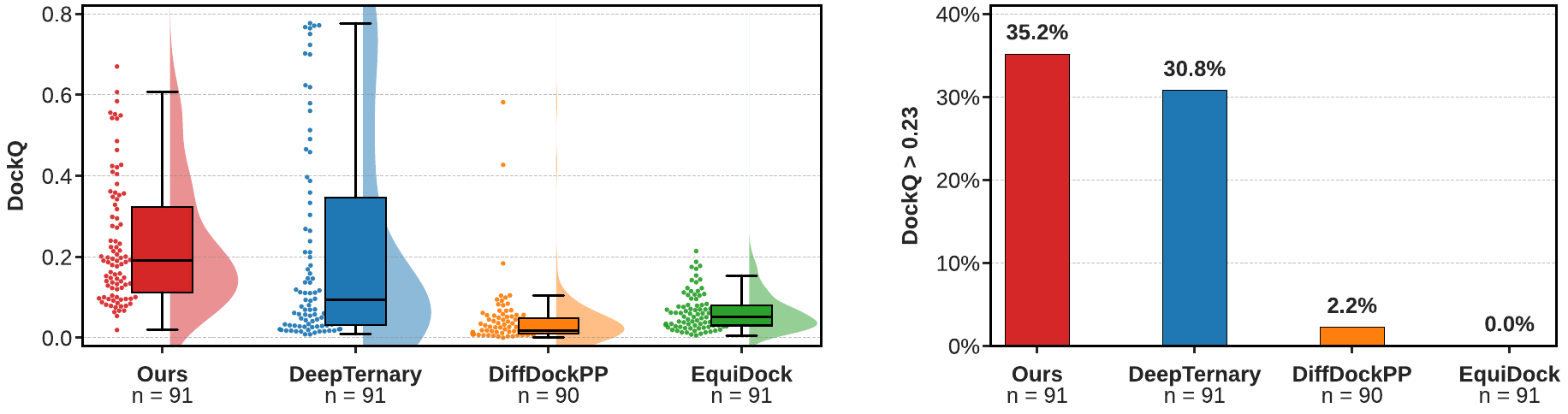}
    \caption{DockQ results.}
    \label{fig:dockq}
\end{figure}

\begin{figure}[H]
\vspace{-2.0em}
    \centering
    \includegraphics[width=1.0\linewidth]{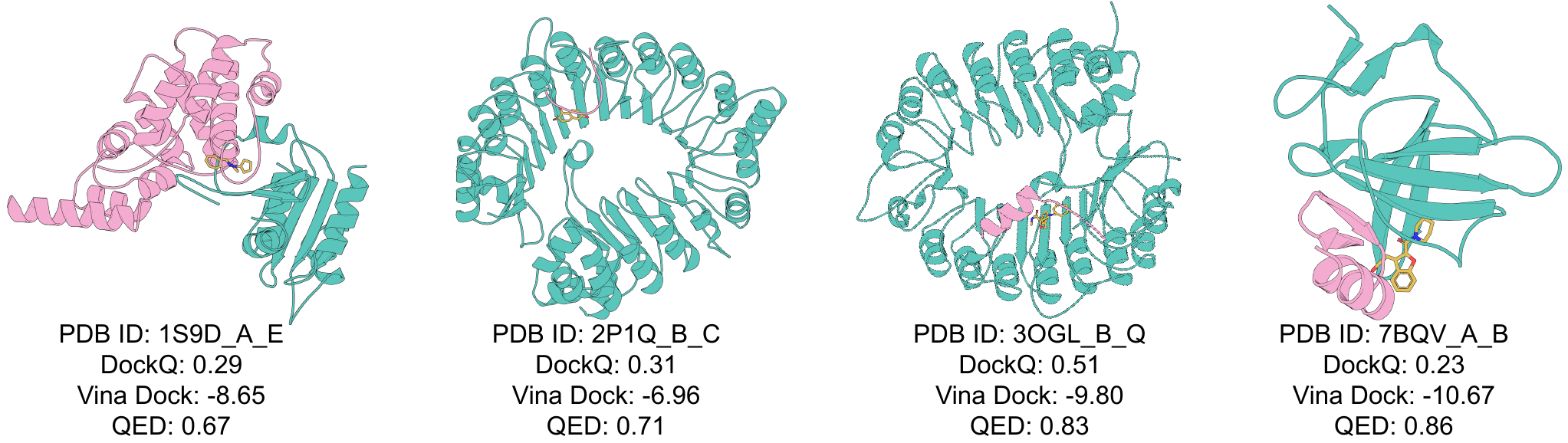}
    \caption{Serval ternary complexes generated by {\ours}.}
    \label{fig:examples}
\end{figure}

\subsection{Ellipsoid-Parameterized Interface Prediction} 
To further answer \textbf{RQ2}, we evaluate estimated interface ellipsoids on the held-out TernaryDB test split and compare against \textbf{PeSTo}~\citep{pesto} and \textbf{ScanNet}~\citep{scannet}. 
Residue-level interface predictions from these methods are converted into ellipsoids. 
We report geometry- and distribution-aware metrics, including JSD, Wasserstein-2 distance (W2), Center-L2, and LogDet error (see Appendix~\ref{app:ellipsoid_metric}). 
As shown in Tab.~\ref{tab:ellipsoid_interface_metrics}, our method consistently outperforms both baselines across all metrics, achieving substantially lower divergence and improved geometric alignment with ground-truth interface ellipsoids.  

\begin{table}[H]
\centering
\small
\setlength{\tabcolsep}{5pt}
\renewcommand{\arraystretch}{1.07}
\caption{Ellipsoid interface estimation on the test set.}
\label{tab:ellipsoid_interface_metrics}
\begin{tabular}{lcccc}
\toprule
\textbf{Method} & \textbf{JSD $\downarrow$} & \textbf{W2 Dist $\downarrow$} & \textbf{Center-L2 $\downarrow$} & \textbf{LogDet Error $\downarrow$} \\
\midrule
Ours & \textcolor{RubineRed}{0.47 $\pm$ 0.15} & \textcolor{RubineRed}{12.65 $\pm$ 7.89} & \textcolor{RubineRed}{10.99 $\pm$ 7.47} & \textcolor{RubineRed}{3.83 $\pm$ 5.01} \\
PeSTo   & 0.68 $\pm$ 0.04 & 58.26 $\pm$ 46.03 & 57.39 $\pm$ 46.25 & 8.00 $\pm$ 11.54 \\
ScanNet & 0.68 $\pm$ 0.04 & 58.96 $\pm$ 45.34 & 57.57 $\pm$ 45.85 & 10.20 $\pm$ 12.92 \\
\bottomrule
\end{tabular}
\end{table}




\begin{wrapfigure}{r}{0.26\linewidth}
  \centering
  \vspace{-3.0em}
  \includegraphics[width=\linewidth]{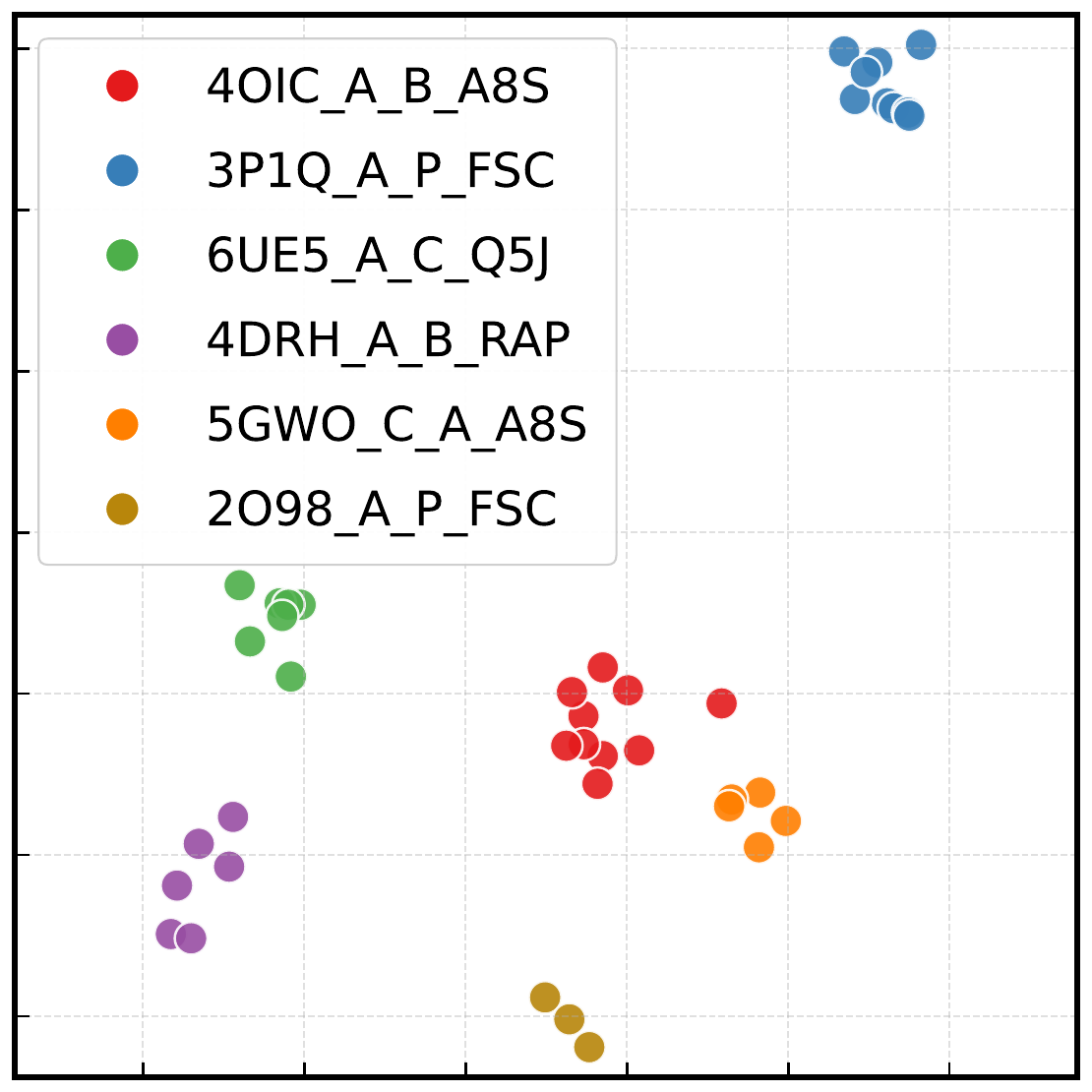}
  \caption{T-SNE of the interface model's hidden states.}
  \label{fig:interface_tsne}
  \vspace{-2.5em}
\end{wrapfigure}
\noindent \textbf{Interface Model Captures the Potential Protein-Protein Interface Pattern.} 
To further answer \textbf{RQ2}, we analyze the learned interface representations. 
We first cluster the ternary complexes in the test set by structural and functional similarity, yielding several groups represented by one or two complexes (e.g., \texttt{2O98\_A\_P\_FSC}). 
Six representative groups are selected, and the two proteins in each complex are encoded by the interface model. 
As shown in Fig.~\ref{fig:interface_tsne}, the resulting features are visualized using t-SNE~\citep{tsne}. 
Interestingly, the embeddings form compact and well-separated clusters that closely align with the structural grouping of the complexes, indicating that the model captures transferable, high-level interface geometry and interaction patterns rather than fitting samples. 

\subsection{Ablation Study}
\begin{wrapfigure}{r}{0.49\linewidth}
    \centering
    \vspace{-1.0em}
    \includegraphics[width=\linewidth]{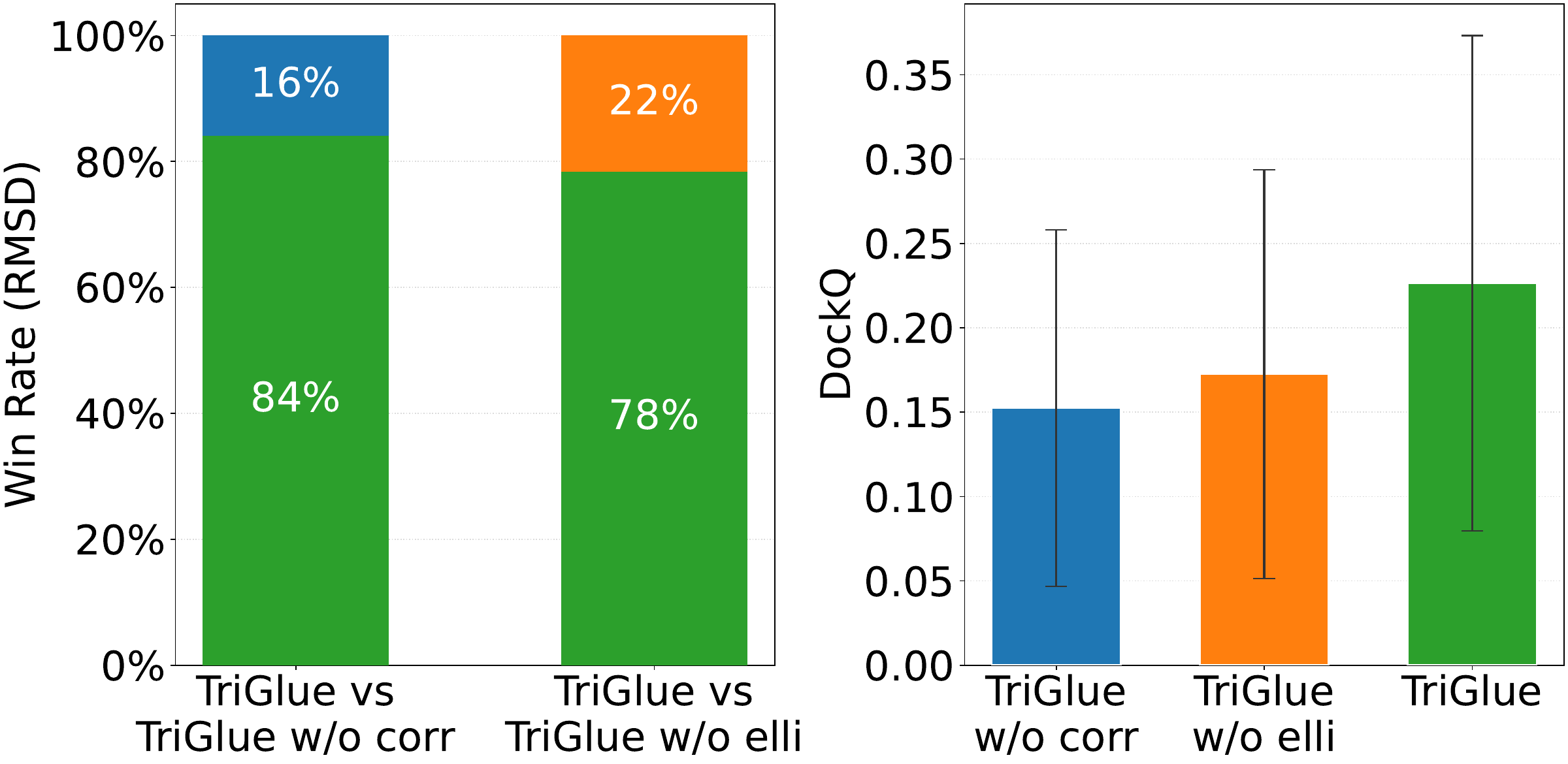}
    \caption{Ablation study.}
    \vspace{-2.0em}
    \label{fig:ablation_study}
\end{wrapfigure}
To answer \textbf{RQ3}, we conduct an in-depth ablation study to evaluate the effect of interface injection in TriGlue. 
We consider two ablated variants: 
(i) \textbf{TriGlue w/o elli}, which removes the ellipsoid interface prior (Eq.~(\ref{eq:interface1})); and 
(ii) \textbf{TriGlue w/o corr}, which disables the interface correction (Eq.~(\ref{eq:interface2})). 
We report the RMSD win rate of TriGlue against the two ablated variants, and the DockQ scores of all three models. 
As shown in Fig.~\ref{fig:ablation_study}, we observe that: 
\begin{itemize}[leftmargin=*]
    \item Both \textbf{TriGlue w/o elli} and \textbf{TriGlue w/o corr} exhibit substantially weaker docking performance for the unbound target protein, reflected by lower RMSD win rates and reduced DockQ scores. 
    This observation is consistent with our hypothesis that accurate modeling of the protein-protein interface is critical for ternary complex generation. 
    \item Notably, removing the interface correction module results in a more pronounced drop in performance. 
    The average DockQ decreases from 0.22 to 0.15, and ligand placement accuracy is also significantly affected. 
    We attribute this behavior to the increased structural noise introduced by the unconstrained rigid-body pose of the unbound protein. 
    Without interface correction, the stochasticity in protein positioning hampers the model’s ability to reliably interpret and exploit interface cues during generation.
\end{itemize}
\section{Related Work}
\textbf{Flow Matching} has emerged as an efficient paradigm for generative modeling by learning a continuous-time vector field that transports samples from a simple prior to the data distribution~\citep{flowmatching}, which demonstrates strong performance in Euclidean domains, particularly in applications such as image~\citep{flowmatchingimage}, video generation~\citep{flowmatchingvideo}, and robotics~\citep{pi0}.
Euclidean flow matching has been further generalized to non-Euclidean space, e.g., Riemannian manifolds~\citep{manifoldflowmatching}, enabling principled modeling of structured objects like protein backbone~\citep{se3flowmatchingbackbone}, and discrete DNA sequence~\citep{flowmatchingdna}.

\textbf{Deep Learning for Protein Design} has advanced rapidly following the emergence of AlphaFold~\citep{alphafold}, which revolutionized computational biology through highly accurate protein structure prediction.
This breakthrough has catalyzed a broad spectrum of generative approaches, including RFDiffusion for enzyme design~\citep{rfdiffusion,rfdiffusion2}, SE(3)-Diffusion for protein backbone generation~\citep{se3diffusionbackbone}, Boltzgen for binder design~\citep{boltzgen}, and peptide design~\citep{ppflow,pepflow,pepmimic}. 
Subsequent efforts have extended these advances to protein complexes, exemplified by AlphaMultimer~\citep{alphafoldmultimer} and AlphaFold3~\citep{alphafold3}, enabling accurate modeling of multi-chain assemblies.
However, these approaches primarily focus on pairwise interactions and do not explicitly address ternary complex design, such as molecular glue degrader or PROTAC-mediated systems. DeepTernary~\citep{deepternary} predicts ternary structures from given sequences, whereas our work targets the \emph{de novo} design of molecular glue-induced ternary complexes.
\section{Conclusion}
In this work, we investigate the problem of molecular glue design from a generative modeling perspective. 
Unlike conventional structure-based drug design, molecular glue discovery requires the coordinated modeling of ligand generation, protein--protein docking, and ternary complex assembly under unknown interface conditions. 
To address this challenge, we propose {\ours}, a biology-inspired generative framework that decomposes ternary complex generation into interface estimation and interface-conditioned complex generation. 
Specifically, an SE(3)-equivariant interface module is introduced to infer geometrically meaningful protein--protein interfaces from unbound structures, while an interface-conditioned ternary flow network jointly generates the molecular glue and predicts the rigid-body transformation required for ternary complex assembly. 
Extensive experiments demonstrate that {\ours} achieves strong performance in both molecular generation and ternary complex reconstruction, producing chemically valid molecules and structurally plausible complexes. 
Overall, our results highlight the promise of biology-inspired generative modeling for molecular glue discovery and provide a new direction for AI-driven targeted protein degradation.

\bibliography{ref}



\newpage
\appendix
\startcontents[appendix]

\section*{Appendix Contents}
\printcontents[appendix]{}{1}{\setcounter{tocdepth}{2}}

\newpage

\section{Terminology}\label{app:terminology}
Table~\ref{tab:terminology} provides supplementary introductions to the biological background and abbreviations of the technical terms that appear in this paper. 

\begin{table}[H]
\centering
\caption{Terminology list.}
\label{tab:terminology}
\renewcommand{\arraystretch}{1.20}
\begin{tabularx}{\linewidth}{
>{\raggedright\arraybackslash}p{3.2cm}
>{\centering\arraybackslash}p{1.8cm}
>{\raggedright\arraybackslash}X}

\toprule
Terminology & Abbreviation & Description \\
\midrule
\textbf{Target Protein} & \textbf{POI} & The specific protein intended to be modulated, inhibited, stabilized, or degraded by a therapeutic molecule or biological mechanism in order to achieve a desired biological or therapeutic effect. \\
\midrule
\textbf{Targeted Protein Degradation} 
& \textbf{TPD} 
& A therapeutic strategy that eliminates disease-related proteins by harnessing the cell’s natural protein disposal machinery rather than inhibiting protein activity directly. \\
\midrule
\textbf{Proteasome}
& -- 
& A large multisubunit protease complex that degrades ubiquitin-tagged proteins and maintains protein quality control in cells. \\
\midrule
\textbf{Molecular Glue Degrader} 
& \textbf{MGD} 
& A small molecule that induces interaction between an E3 ligase and a target protein, promoting ubiquitination and subsequent proteasomal degradation. \\
\midrule
\textbf{Proteolysis Targeting Chimeras} 
& \textbf{PROTACs} 
& Heterobifunctional molecules that bind a protein of interest via a target warhead and recruit an E3 ligase via a second ligand, connected by a chemical linker that controls ternary complex geometry. \\
\midrule
\textbf{E3 Ligase} 
& -- 
& An enzyme that recognizes substrate proteins and catalyzes ubiquitin transfer from E2 enzymes, determining substrate specificity in the ubiquitin proteasome system. \\
\midrule
\textbf{Ubiquitin} 
& -- 
& A conserved regulatory protein that can be attached to substrates to mark them for degradation or regulate signaling pathways. \\
\midrule
\textbf{Ubiquitination} & -- & A post-translational modification in which ubiquitin, a small regulatory protein, is covalently attached to lysine residues on a substrate protein through an enzymatic cascade involving E1, E2, and E3 enzymes, commonly marking the protein for proteasomal degradation or altering its cellular function. \\
\bottomrule
\end{tabularx}
\end{table}

Traditional drug discovery has long relied on inhibitors to modulate protein activity directly, yet this approach faces limitations when targeting certain disease-related proteins. 
Targeted protein degradation (TPD)~\citep{protacs0,mg0,lytac0,abtac0} has emerged as a transformative therapeutic paradigm, which harnesses the cell’s intrinsic proteolytic machinery, such as the proteasome and lysosomal pathways, to selectively degrade disease-associated proteins, in contrast to traditional small-molecule inhibitors that merely block protein function~\citep{tpd2,tpd1,tpd0,tpd3}.

Here, we provide additional information related to molecular glues.

\textbf{Molecular Glue} is a class of small molecules that modulate protein–protein interactions by enhancing the affinity between specific proteins within cells. 
These compounds can either induce novel protein interactions (type I) or stabilize existing ones (type II), providing an alternative therapeutic approach to traditional drug discovery. 
Notably, molecular glues offer a means to target proteins once considered ``undruggable'' by conventional strategies. 
They function through diverse mechanisms, e.g., promoting targeted protein degradation or disrupting essential protein functions, and hold potential for treating a range of diseases, including cancer and neurodegenerative disorders.

Molecular glues can be functionally classified based on their mechanism of action in modulating protein–protein interactions: type I glues stabilize non-native or novel PPIs, while type II glues enhance the stability of native, pre-existing interactions. 
\begin{itemize}
    \item \textbf{Type I} 
    Molecular glues induce non-native protein–protein interactions that effectively block or ``shield'' a protein's endogenous activity. Rather than promoting protein degradation, these compounds typically stabilize inactive conformations or obscure critical functional regions of the target protein. 
    This interference prevents the protein from participating in its normal biological processes, such as by blocking active sites, disrupting ligand binding, or inhibiting native protein–protein interactions~\citep{tpd0,dewey2023molecular}. 
    \item \textbf{Type II}
    Molecular glues stabilize endogenous protein–protein interactions by modulating protein conformation or dynamics. 
    Depending on the context, they can either inhibit or enhance protein activity by locking interacting partners into specific functional states. 
    A notable example is lenalidomide, an immunomodulatory drug that binds cereblon (CRBN) and reprograms it to target transcription factors such as IKZF1 and IKZF3 for proteasomal degradation in multiple myeloma~\citep{tpd0}. 
\end{itemize}

Generally, Molecular Glues employ two primary mechanisms to modulate protein-protein interactions: allosteric regulation and direct bridging. 
\begin{itemize}
    \item \textbf{Allosteric Regulation}.
    In the context of allosteric regulation, molecular glues interact with a primary protein, triggering conformational rearrangements that generate or reinforce novel binding interfaces. 
    These newly formed interfaces facilitate the recruitment of a secondary protein~\citep{allosteric_regulation}. 
    E.g., lenalidomide associates with the E3 ubiquitin ligase cereblon (CRBN), altering its structural configuration to enable the recruitment and subsequent ubiquitination of neo-substrates like IKZF1 and IKZF3, leading to their degradation~\citep{allosteric_regulation_example}. 
    \item \textbf{Direct Bridging}.
    Unlike allosteric modulation, direct bridging entails the molecular glue simultaneously binding to two proteins at their interface, effectively connecting them. 
    A classic example is rapamycin, which binds both FKBP12 and mTOR to form a stable ternary complex, thereby suppressing mTOR’s kinase function~\citep{rapamycin}. 
    Although such direct bridging mechanisms do occur, they are relatively rare compared to allosteric modulation. 
    The latter is more prevalent among molecular glues because it leverages the inherent flexibility of protein conformations to induce new interaction surfaces, even in the absence of pre-defined binding sites~\citep{FKBP12-mTOR}.
\end{itemize}

\section{Ellipsoid Construction}\label{app:ellip}
Given an interface residue set $\mathcal{I}$ (either $\mathcal{I}_{\mathcal{R}}$ or $\mathcal{I}_{\tilde{\mathcal{T}}}$), we approximate its spatial distribution using a Gaussian ellipsoid parameterized by a centroid $\boldsymbol{\mu}$ and covariance matrix $\boldsymbol{\Sigma}$. 
This representation provides a compact description of the interface location, spatial extent, and principal orientation in 3D space. 
Formally, the centroid is computed as the mean of the C$\alpha$ coordinates:
\begin{equation}
\boldsymbol{\mu}=
\frac{1}{|\mathcal{I}|}
\sum_{i \in \mathcal{I}} \mathbf{X}_i,
\end{equation}
and the covariance matrix is estimated as:
\begin{equation}
\boldsymbol{\Sigma}=
\frac{1}{|\mathcal{I}|}
\sum_{i \in \mathcal{I}}
(\mathbf{X}_i - \boldsymbol{\mu})
(\mathbf{X}_i - \boldsymbol{\mu})^{\top},
\end{equation}
where $\mathbf{X}_i \in \mathbb{R}^3$ denotes the C$\alpha$ coordinate of residue $i$. 

The resulting Gaussian ellipsoid $\mathcal{N}(\boldsymbol{\mu}, \boldsymbol{\Sigma})$ captures the global geometry of the interface: the mean encodes the interface center, while the eigenstructure of $\boldsymbol{\Sigma}$ reflects its spatial spread and principal axes. 
This compact parameterization serves as the geometric representation used throughout our framework for interface alignment and evaluation. 

\section{Architecture}\label{app:architecture}
\begin{algorithm}[H]
\caption{Invariant Point Attention (IPA)}
\label{alg:ipa}
\begin{algorithmic}[1]
\State \textbf{Input:} single feature $\{s_i\}$, pair features $\{z_{ij}\}$, residue frame $\{T_i\}$ 

\State $q^h_i, k^h_i, v^h_i \gets \text{LinearNoBias}(s_i)$ \Comment{$h=1,\ldots,N_{\text{head}}$}
\State $\tilde{q}^{hp}_{i}, \tilde{k}^{hp}_{i} \gets \text{LinearNoBias}(s_i)$ \Comment{$p=1,\ldots,N_{\text{query points}}$}
\State $\tilde{v}^{hp}_{i} \gets \text{LinearNoBias}(s_i)$ \Comment{$p=1,\ldots,N_{\text{point values}}$}
\State $b^h_{ij} \gets \text{LinearNoBias}(z_{ij})$

\State $w_C \gets \sqrt{\tfrac{2}{9N_{\text{query}}}}, \quad w_L \gets \sqrt{\tfrac{1}{c}}$
\State $a^h_{ij} \gets \mathrm{softmax}_j \Big( w_L \cdot (q^h_i)^\top k^h_j - 
    \tfrac{w_C^2}{2} \sum_p \| T_i \tilde{q}^h_{i,p} - T_j \tilde{k}^h_{j,p} \|_2^2 \Big)$

\State $\tilde{o}^h_i \gets \sum_j a^h_{ij} z_{ij}$
\State $o^h_i \gets \sum_j a^h_{ij} v^h_j$
\State $\tilde{o}^{h,p}_i \gets T_i^{-1} \Big( \sum_j a^h_{ij} \, T_j \tilde{v}^h_{j,p} \Big)$

\State $\tilde{s}_i \gets \text{Linear}\Big( \text{concat}_{h,p}(\tilde{o}^h_i, o^h_i, \tilde{o}^{h,p}_i, \|\tilde{o}^{h,p}_i\|) \Big)$

\State \Return $\{\tilde{s}_i\}$
\end{algorithmic}
\end{algorithm}

In this section, we provide the implementation details of our neural network architecture. 

\noindent \textbf{Latent Ternary Encoder}. 
We adopt Invariant Point Attention (IPA)~\citep{alphafold} to encode the proteins and ligand in the ternary flow network. 
As shown in Algorithm~\ref{alg:ipa},
the IPA block jointly attends over scalar features, pairwise relational features, and 3D coordinates. 
The key design principle is SE(3) invariance: the model’s predictions remain unchanged under global rotations and translations of the input structure. 
IPA augments attention with geometric queries and keys represented as points in 3D space. 
During attention computation, the similarity between residues depends not only on feature similarity but also on the relative spatial configuration between these points. The resulting attention weights therefore encode both biochemical context and geometric compatibility, allowing the network to iteratively refine spatial representations. 

\textbf{Node (single) features.}
For each protein's residue, we form the single feature by concatenating: (i) a learned amino-acid embedding; (ii) heavy-atom coordinates expressed in the local frame defined by the N--C$\alpha$--C; and (iii) backbone dihedral descriptors with chain-aware validity masking. 
For the ligand, each atom is embedded by a learned atom-type embedding as the single feature. 
The ligand does not have the residue frame. 
So we set the rotation matrix as the identity matrix and the translation as the atom's coordinate to construct the local coordinate system. 

\textbf{Pair (edge) features.}
For protein residue pairs $(i,j)$, the pair feature encodes: (i) a learned embedding of the ordered amino-acid pair; (ii) sequence separation along the same chain (clipped integer offset, zeroed across chains); (iii) all-pairs heavy-atom distances summarized by Gaussian radial-basis activations whose scale is conditioned on the amino-acid pair; and (iv) pairwise backbone dihedral channels. 
For ligand atom pairs $(i,j)$, we use Gaussian-bump encodings of the interatomic distance together with an MLP on the normalized displacement $\mathbf{r}_{ij}/\|\mathbf{r}_{ij}\|$.

All of the single (pair) features are mapped into a unified dimension. 
Given the input $\mathcal{G}_{\mathcal{R}}$, $\mathcal{G}_{\tilde{\mathcal{T}}}$, and $\mathcal{G}_{\tilde{\mathcal{M}}}$ (noised graph from $\mathcal{G}_{\mathcal{M}}$), we encode them with the \emph{Invariant Point Attention} block~\citep{alphafold}:
\begin{equation}
    \mathbf{H}_{\mathcal{R}}, \mathbf{H}_{\tilde{\mathcal{T}}},\mathbf{h}_{\tilde{\mathcal{M}}}=
    \mathrm{IPA}(\mathcal{G}_{\mathcal{R}},\mathcal{G}_{\tilde{\mathcal{T}}},\mathcal{G}_{\tilde{\mathcal{M}}}),
\end{equation}
where $\mathbf{H}_{\mathcal{R}}, \mathbf{H}_{\tilde{\mathcal{T}}},~ \text{and}~\mathbf{h}_{\tilde{\mathcal{M}}}$ denote invariant single representations that encode both sequence and structural information, which are subsequently used by the denoising networks. 

\noindent \textbf{Rigid Transformation Block}. 
The $\Phi_{\text{RT}}$ is designed as a lightweight global network that predicts an incremental rigid transformation from a compact interface-level representation. 
At each layer $i$, $\Phi_{\text{RT}_i}$ uses a fixed-length global descriptor that summarizes the current alignment state and noised context, and predicts the residual rotation vector and translation vector. 

Specifically, given the virtual interface point sets $\mathbf Y_{\mathcal R}$ and $\mathbf Y_{\tilde{\mathcal T}}$, we first compute their centroids:
\begin{equation}
\boldsymbol\mu_1=\frac{1}{K}\sum_{k=1}^K \mathbf y_{\mathcal R,k},
\qquad
\boldsymbol\mu_2=\frac{1}{K}\sum_{k=1}^K \mathbf y_{\tilde{\mathcal T},k}.
\end{equation}
We then construct a global rigid-state descriptor by concatenating:
\begin{equation}
H =
\text{Concat}\!\left(\mathbf R^{*},\mathbf t^{*},
\boldsymbol\mu_1,\boldsymbol\mu_2,
\mathbf R_t,\mathbf t_t,
\mathbf c_t
\right),
\end{equation}
where $\mathbf c_t$ is a low-dimensional context vector obtained by projecting the timestep embedding. 
The descriptor $h$ is processed by a shared multi-layer perceptron trunk
\begin{equation}
h = \text{MLP}_{\text{trunk}}(H),
\end{equation}
which produces a latent global representation capturing the relative pose between the proteins and the timestep. 
Two linear heads predict rotation and translation updates:
\begin{align}
\Delta\omega &= \text{RotHead}(h),\\
\Delta\mathbf t &= \text{TransHead}(h),\\
     \mathbf{R}^{*} &= \exp([\Delta\omega\times])\mathbf{R}^{*},\\
     \mathbf{t}^{*} &= \mathbf{t}^{*} + \Delta\mathbf{t},\\
     \mathbf{Y}_{\mathcal{\tilde{\mathcal{T}}}} &= \mathbf{R}^{*}\mathbf{Y}_{\mathcal{\tilde{\mathcal{T}}}} + \mathbf{t}^{*},
\end{align}
Notably, rotation is parameterized using the continuous 6D representation and converted to a valid rotation matrix via Gram--Schmidt orthogonalization. 
Finally, after $T$ iterations, block $\Phi_{\text{RT}}$ outputs the prediction $\hat{\mathbf{R}}_1,\hat{\mathbf{t}}_1$.

\noindent \textbf{Coordinate Block}.
The $\Phi_{\mathbf X}$ refines ligand atom positions that jointly model intra-ligand geometry and protein–ligand interactions. 
Given ligand coordinates $\mathbf X_t=\{\mathbf x_{i,t}\}_{i=1}^{N}$ at step $t$, the block predicts denoised coordinates through iterative message passing. 
Ligand atom features are first projected into a latent space with an IPA block and enriched with protein context using cross-attention. 
Specifically, ligand single features attend to the concatenated single features of the two proteins, allowing each atom to capture biochemical context:
\begin{equation}
\mathbf h_i = \text{CrossAttn}(\mathbf h_i,\, \mathbf H_{\mathcal{R}}, \mathbf H_{\tilde{\mathcal{T}}}).
\end{equation}

The block performs $L$ EGNN-style updates. 
At each layer, ligand features are updated using messages from three interaction types:
ligand–ligand, ligand– receptor protein, ligand–target protein. 

For any pair of nodes $(i,j)$, messages depend only on invariant features:
\[
\mathbf{m}_{ij} = \phi\big(\mathbf h_i,\mathbf h_j,\|\mathbf x_{i,t}-\mathbf x_{j,t}\|^2,t\big),
\]
and are aggregated via residual updates to produce refined atom embeddings.

After each feature update, coordinates are refined using a bi-level message passing with the messages:
\begin{align}
\hat{\mathbf x}_{i} &=
\mathbf x_{i,t}
+
\underbrace{
\sum_{j\in\mathcal V_I}
(\mathbf x_{i,t}-\mathbf x_{j,t})
\phi_{\text{intra}}
(\mathbf h_i,\mathbf h_j,\|\mathbf x_{i,t}-\mathbf x_{j,t}\|^2,t)
}_{\text{Intrinsic ligand update}}
\nonumber\\
&\quad+
\underbrace{
\sum_{k\in\mathcal V_E}
(\mathbf x_{i,t}-\mathbf X_k)
\phi_{\text{extra}}
(\mathbf h_i,\mathbf H_k,\|\mathbf x_{i,t}-\mathbf X_k\|^2,t)
}_{\text{Protein-conditioned update}},
\end{align}
where $\phi(\cdot)$ is MLP, and only ligand coordinates are updated. 
Feature and coordinate updates are repeated for multiple layers, yielding a coarse-to-fine denoising process. 
Finally, the $\Phi_{\mathbf{x}}$ block outputs the predicted coordinate $\hat{\mathbf{x}_1}$.

\noindent \textbf{Atom Type Block.} 
To predict the atom type, we first form a joint protein context from single representations:
\begin{equation}
\mathbf{H}_{\mathcal{P}} = 
\text{Concat}\!\left(\mathbf{H}_{\mathcal{R}}, \mathbf{H}_{\tilde{\mathcal{T}}}\right).
\end{equation}
Then, we update the ligand hidden state via cross-attention:
\begin{equation}
\mathbf{h}_{\tilde{\mathcal{M}}}^{*}
= \text{CrossAttn}\!\left(
\mathbf{h}_{\tilde{\mathcal{M}}},
\mathbf{H}_{\mathcal{P}}
\right).
\end{equation}
Finally, the atom-type distribution is predicted with timestep embedding $\mathbf{c}_t$:
\begin{equation}
\hat{\mathbf{a}}_{1}
= \Phi_{\mathbf{a}}\!\left(
\mathbf{h}_{\tilde{\mathcal{M}}}^{*},\,
\mathbf{c}_t
\right).
\end{equation}
We sample the atom type from the distribution to decode the ligand.

\section{Flow Matching}\label{app:flow_matching}
\noindent \textbf{Conditional Flow Matching}. The conditional flow matching (CFM) framework \citep{flowmatching} offers an approach to learn a probability time-dependent flow $\psi_t : [0, 1] \times \mathbb{R}^d \rightarrow \mathbb{R}^d$ that transforms samples from a simple base distribution $p_0$ into a target data distribution $p_1 \approx q$. 
The flow and its corresponding vector field $u_t : [0, 1] \times \mathbb{R}^d \rightarrow \mathbb{R}^d$ are governed by the ordinary differential equation (ODE): $\frac{\text{d}}{\text{d}t} x_t = u_t(x_t)$, where $x_t = \psi_t(x_0)$. 
However, it is challenging to model this vector field directly due to the complexity of its closed-form solution. 
Instead, CFM circumvents this limitation by conditioning the flow and vector field on a specific data pair: a sample $x_0 \sim p_0$ and a target $x_1 \sim p_1$. 
This leads to a tractable formulation of the interpolated trajectory $x_t = \psi_t(x_0 \mid x_1)$ and the conditional vector field $u_t(x_t \mid x_1) = \frac{\text{d}}{\text{d}t} x_t$. 
Hence, the standard CFM training loss is formulated as:
\begin{equation}\label{eq:cfm_loss}
    \mathcal{L}_{\text{CFM}}(\theta) = \mathbb{E}_{t \sim \mathcal{U}(0, 1)} \left\| v_{t}(\psi_{t}(x_{0} \mid x_1)) - \frac{\text{d}}{\text{d}t}x_t  \right\|_2^2,
\end{equation}
where $\mathcal{U}(0, 1)$ is a uniform distribution and $v_{t}$ is a neural network parameterized by $\theta$. 

Based on the CFM, our Ternary Flow Network involves three types of flow:

\noindent \textbf{Euclidean CFM for Coordinate.}
The Euclidean conditional flow interpolates between Gaussian noise $\mathbf{x}_{i,0}\sim\mathcal N(\mathbf 0,I_3)$ and the data distribution $\mathbf{x}_{i,1}\sim p(\mathbf x_i\mid\star)$:
\begin{equation}
\psi_t(\mathbf{x}_{i,0}\mid\mathbf{x}_{i,1})
= t\mathbf{x}_{i,1}+(1-t)\mathbf{x}_{i,0}
\end{equation}
The target vector field is:
\begin{equation}
u_t
= \frac{d}{dt}\psi_t
= \mathbf{x}_{i,1}-\mathbf{x}_{i,0}.
\end{equation}
The CFM loss is:
\begin{equation}\label{eq:euclidean_cfm}
\mathcal L_{\text{coord}}
=\mathbb E_{t\sim U(0,1)}
\left\|
v_\theta(\mathbf x_{i,t},t,\star)
-(\mathbf x_{i,1}-\mathbf x_{i,0})
\right\|_2^2.
\end{equation}

\noindent \textbf{Simplex CFM for Atom Type}. 
Following PepFlow~\citep{pepflow}, we map the discrete atom type $\mathbf{a}_i$ to a continuous logit space $\mathbf{s}_i \in \mathbb{R}^{\mid\mathcal{A}\mid}$ via soft one-hot encoding with a scale constant $K > 0$:
\begin{equation}
    \mathbf{s}_i[j] = 
    \begin{cases}
        K, & j = \mathbf{a}_i, \\
        -K, & \text{otherwise}.
    \end{cases}
\end{equation}
This yields a sharp probability distribution \texttt{softmax}($\mathbf{s}_i$) concentrated on the true type, corresponding to a point on the probability simplex $\Delta^{\mid\mathcal{A}\mid - 1}$.
We define the conditional flow in the logit space as a linear interpolation between the prior $\mathbf{s}_{i,0} \sim \mathcal{N}(\mathbf{0}, K^2 I_{\mid\mathcal{A}\mid})$ and the target $\mathbf{s}_{i,1}$ derived from $\textbf{a}_i$:
\begin{equation}
    \boldsymbol{\psi}_{t}(\mathbf{s}_{i,0} \mid \mathbf{s}_{i,1}) = t \mathbf{s}_{i,1} + (1 - t) \mathbf{s}_{i,0}.
\end{equation}
The associated vector field is:
\begin{equation}
    u_{t}(\mathbf{s}_{i,t} \mid \mathbf{s}_{i,1}, \mathbf{s}_{i,0}) = \frac{\mathrm{d}}{\mathrm{d}t} \boldsymbol{\psi}_t = \mathbf{s}_{i,1} - \mathbf{s}_{i,0}.
\end{equation}
Similar to the Euclidean CFM, a neural network $v_{\theta}$ is trained to predict this vector field, and the CFM loss is:
\begin{equation}\label{eq:simplex_cfm}
    \mathcal{L}_{\text{type}} = \mathbb{E}_{t \sim \mathcal{U}(0,1)} \left[ \left\| v_{\theta}(\mathbf{s}_{i,t}, t, \star) - (\mathbf{s}_{i,1} - \mathbf{s}_{i,0}) \right\|_2^2 \right].
\end{equation}

\noindent \textbf{Riemannian CFM for Rigid Transformation}. 
Following \citep{se3flowmatchingbackbone}, we predict $(R, \mathbf{t})$ within the flow matching framework, where the interpolation of orientations is performed along geodesics on $\mathrm{SO}(3)$, and translations are handled in Euclidean space. 

For rotation, we employ geodesic interpolation on $\mathrm{SO}(3)$, which serves as the manifold counterpart of linear interpolation in Euclidean space and guarantees shortest-path evolution.
Formally, given a random prior $\mathbf{R}_0 \sim U(\mathrm{SO}(3))$ and a target rotation $\mathbf{R}_1 \sim p(\mathbf{R} \mid \star)$, the conditional flow is:
\begin{equation}
    \boldsymbol{\psi}_{t}(\mathbf{R}_0 \mid \mathbf{R}_1) 
    = \exp_{\mathbf{R}_0}\!\big(t \cdot \log_{\mathbf{R}_0}(\mathbf{R}_1)\big),
\end{equation}
where $\exp$ and $\log$ denote the Riemannian exponential and logarithm maps on $\mathrm{SO}(3)$, computable via Rodrigues’ formula. 
The associated vector field is defined as:
\begin{equation}
    u_{t}(\mathbf{R}_t \mid \mathbf{R}_0, \mathbf{R}_1) 
    = \frac{\mathrm{d}}{\mathrm{d}t}\boldsymbol{\psi}_{t}
    = \frac{\log_{\mathbf{R}_t}(\mathbf{R}_1)}{1-t}.
\end{equation}
Then the CFM objective is formulated as:
\begin{equation}\label{eq:rotation_cfm}
    \mathcal{L} = 
    \mathbb{E}_{t \sim \mathcal{U}(0,1)}\left[
    \left\| v_{\theta}(\mathbf{R}_t, t, \star) - \frac{\log_{\mathbf{R}_t}(\mathbf{R}_1)}{1-t} \right\|_2^2\right].
\end{equation}

For translations, a Euclidean conditional flow is defined between $\mathbf{t}_0 \sim \mathcal{N}(\mathbf{0}, \sigma^2 I)$ and $\mathbf{t}_1 \sim p(\mathbf{t} \mid \star)$, and the corresponding CFM obejctive $\mathcal{L}$ has the same form with Eq.~(\ref{eq:euclidean_cfm}). 

\noindent \textbf{Reparameterized CFM Objectives.}
Following the reparameterization strategy~\citep{pepflow}, instead of predicting the ground-truth conditional vector field, we directly predict the target data and compute the vector field implicitly from the reconstructed target.
For the Euclidean flow, let $\hat{\mathbf x}_{i,1}=v_\theta(\mathbf x_{i,t},t,\star)$ denote the reconstructed ligand coordinate, we reparameterized the objective in Eq.~(\ref{eq:euclidean_cfm}) as:
\begin{align}
\mathcal L_{\text{coord}}
&=\mathbb E_{t \sim \mathcal{U}(0,1)}
\left\|
u_t(\mathbf x_{i,t}\mid\hat{\mathbf x}_{i,1},\mathbf x_{i,0})
-
u_t(\mathbf x_{i,t}\mid\mathbf x_{i,1},\mathbf x_{i,0})
\right\|_2^2 \\
&=\mathbb E_{t \sim \mathcal{U}(0,1)}
\left\|
(\hat{\mathbf x}_{i,1}-\mathbf x_{i,0})
-(\mathbf x_{i,1}-\mathbf x_{i,0})
\right\|_2^2 \\
&=\mathbb E_{t \sim \mathcal{U}(0,1)}
\left\|\hat{\mathbf x}_{i,1}-\mathbf x_{i,1}\right\|_2^2 .
\end{align}
Similarly, the other CFM objectives are reformulated as follows:
\begin{align}
& \mathcal L_{\text{type}}
=\mathbb E_{t \sim \mathcal{U}(0,1)}
\left\|
\hat{\mathbf s}_{i,1}-\mathbf s_{i,1}
\right\|_2^2 . \\
& \mathcal L_{\text{rot}}
=\mathbb E_{t \sim \mathcal{U}(0,1)}
\left\|
\left(\log_{\mathbf R_t}(\hat{\mathbf R}_1)
-
\log_{\mathbf R_t}(\mathbf{R}_1)\right) /
(1-t)
\right\|_2^2 . \\
& \mathcal L_{\text{trans}}
=\mathbb E_{t \sim \mathcal{U}(0,1)}
\left\|
\hat{\mathbf t}_1-\mathbf t_1
\right\|_2^2 .
\end{align}
In this work, we utilize the reparameterized CFM objectives to train our model.

\section{Sample Algorithm}\label{app:sample}

\begin{algorithm}[H]
\caption{Sampling with Interface-Conditioned Ternary Flow Network}
\label{alg:sampling}
\begin{algorithmic}[1]

\State \textbf{Input:} receptor protein $\mathcal{G}_\mathcal{R}$, target protein $\mathcal{G}_\mathcal{\tilde{T}}$.
\vspace{0.2em}

\State $(\mathbf{Y}_\mathcal{R},\mathbf{Y}_\mathcal{\tilde{T}})
=
\text{InterfaceModel}(\mathcal{G}_\mathcal{R},\mathcal{G}_\mathcal{\tilde{T}})$ 
\Comment{Estimate global virtual interface }

\State $(\mathbf{R}^\star,\textbf{t}^\star)=\text{DifferentiableKabschAlign}(\mathbf{Y}_\mathcal{R},\mathbf{Y}_\mathcal{\tilde{T}})$
\Comment{Compute first-order rigid transformation}

\State $\hat{\mathbf{X}}_{\mathcal{T}} = \mathbf{R}^{*}\mathbf{X}_{\tilde{\mathcal{T}}} + \mathbf{t}^{*}$
\Comment{Interface correction}

\State Sample initial noisy state $\mathbf{R}_0,\mathbf{t}_{0},\mathbf{x}_{0},\mathbf{a}_{0}$

\For{$t=1$ to $N$}

\State
$\hat{\mathbf{R}}_{\frac{t}{N}},\hat{\mathbf{t}}_{\frac{t}{N}},\hat{\mathbf{x}}_{\frac{t}{N}},\hat{\mathbf{a}}_{\frac{t}{N}}=\text{TernaryFlowNet}(\mathbf{R}_{\frac{t-1}{N}},\mathbf{t}_{\frac{t-1}{N}},\mathbf{x}_{\frac{t-1}{N}},\mathbf{a}_{\frac{t-1}{N}},\mathbf{Y}_\mathcal{R},\mathbf{Y}_\mathcal{\tilde{T}})$

\State
$\mathbf{R}_{\frac{t}{N}}
=
\exp_{\mathbf{R}_{\frac{t-1}{N}}}
\Big(
\Delta t \cdot
\log_{\mathbf{R}_{\frac{t-1}{N}}}
(
\hat{\mathbf{R}}_{\frac{t}{N}}
)
\Big)$
\Comment{Update rotation}

\State 
$\mathbf{t}_{\frac{t}{N}}=\mathbf{t}_{\frac{t-1}{N}}+(\hat{\mathbf{t}}_{\frac{t}{N}}-\mathbf{t}_0)\Delta{t}$
\Comment{Update translation}

\State $\mathbf{x}_{\frac{t}{N}}
=
\mathbf{x}_{\frac{t-1}{N}}
+
(
\hat{\mathbf{x}}_{{\frac{t}{N}}}
-
\mathbf{x}_{0}
)\Delta t$
\Comment{Update coordinate}

\State $\mathbf{a}_{\frac{t}{N}}
=
\mathbf{a}_{\frac{t-1}{N}}
+
(
\hat{\mathbf{a}}_{\frac{t}{N}}
-
\mathbf{a}_{0}
)\Delta t$
\Comment{Update atom type}
\EndFor

\State
Decode complex $\mathcal{C}$ from $\{\hat{\mathbf{R}}_1,\hat{\mathbf{t}}_1,\hat{\mathbf{x}}_1,\hat{\mathbf{a}}_1\}$

\State \textbf{return} $\mathcal{C}$

\end{algorithmic}
\end{algorithm}

Algorithm~\ref{alg:sampling} summarizes the sampling procedure of {\ours}. 
Given two unbound proteins, the interface model first predicts the global virtual interfaces and computes an initial rigid-body transformation through differentiable Kabsch alignment. 
The estimated transformation is then used to correct the target protein coordinates, providing an interface-consistent initialization for subsequent generation. 

Starting from noisy rotations, translations, ligand coordinates, and atom types, the ternary flow network iteratively predicts the denoised states conditioned on the estimated interfaces. 
The rotational component is updated along the $\mathrm{SO}(3)$ geodesic, while translations, ligand coordinates, and atom types are progressively refined using Euler steps. 
After $N$ iterations, the final rigid transformation and ligand structure are decoded into the ternary complex.

\section{Proofs}\label{app:proof_virtual_interface}
\noindent \textbf{SE(3) Equivariance of Virtual Interface and Ellipsoid Parameters.}
\renewcommand{\thetheorem}{\arabic{theorem}}
\begin{theorem}
The virtual interface points $\{\mathbf y_k\}_{k=1}^K$ and the elliposid parameters 
$(\hat{\boldsymbol\mu},\hat{\boldsymbol\Sigma})$ defined in Sec.~\ref{sec:interface_predict} are SE(3)-equivariant.
\end{theorem}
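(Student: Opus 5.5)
We work with a transformation $g=(\mathbf Q,\mathbf b)\in\mathrm{SE}(3)$ acting on a single protein's input coordinates, $\mathbf X_i\mapsto \mathbf Q\mathbf X_i+\mathbf b$. We will show that $\mathbf y_k\mapsto \mathbf Q\mathbf y_k+\mathbf b$, $\hat{\boldsymbol\mu}\mapsto \mathbf Q\hat{\boldsymbol\mu}+\mathbf b$ and $\hat{\boldsymbol\Sigma}\mapsto \mathbf Q\hat{\boldsymbol\Sigma}\mathbf Q^\top$. The covariance does not translate; it transforms by conjugation, which is the natural SE(3) action on a second moment. The proof is a chain of three steps, each building on the previous one.

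\emph{Step 1: the encoder.} We invoke the standard EGNN property \citep{egnn}. The output features $\mathbf F$ are SE(3)-invariant, and the output coordinates satisfy $\mathbf z_i\mapsto \mathbf Q\mathbf z_i+\mathbf b$. The reason is that messages depend only on invariant quantities (node features and squared distances), while coordinate updates are sums of relative vectors $(\mathbf z_i-\mathbf z_j)$ weighted by invariant scalars. We apply the transformation to each protein independently, since unbound monomers have arbitrary poses. This requires that the EGNN have no cross-protein coordinate messages; otherwise we must restrict to a common transformation of both proteins. We will state the result for the case the architecture supports, and note that if the two graphs are processed jointly, a common $g$ is needed.

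\emph{Step 2: the virtual points.} The attention logits $\mathbf f_{\mathcal R,i}^\top\mathbf W^k_{\mathcal R}\phi(\mathbf F_{\tilde{\mathcal T}})$ involve only invariant features, so the weights $\alpha^k_{\mathcal R,i}$ are invariant. Because the softmax gives $\sum_i\alpha^k_{\mathcal R,i}=1$, we get
\[
\sum_i\alpha^k_{\mathcal R,i}(\mathbf Q\mathbf z_{\mathcal R,i}+\mathbf b)=\mathbf Q\mathbf y_{\mathcal R,k}+\mathbf b.
\]
The normalization is the key point here: without it, translation equivariance would fail. The same argument applies to $\tilde{\mathcal T}$.

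\emph{Step 3: the ellipsoid parameters.} Averaging the points gives $\hat{\boldsymbol\mu}\mapsto\mathbf Q\hat{\boldsymbol\mu}+\mathbf b$, again using the fact that the weights $1/K$ sum to one. Then $\mathbf y_k-\hat{\boldsymbol\mu}\mapsto\mathbf Q(\mathbf y_k-\hat{\boldsymbol\mu})$, so $\hat{\boldsymbol\Sigma}\mapsto\mathbf Q\hat{\boldsymbol\Sigma}\mathbf Q^\top$. As a check on consistency, the same computation applied to the ground-truth $(\boldsymbol\mu,\boldsymbol\Sigma)$ from Appendix~\ref{app:ellip} gives the same law. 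Consequently $\mathcal L_{\mathrm{ellip}}$ is SE(3)-invariant, because $\|\mathbf Q\mathbf A\mathbf Q^\top\|_F=\|\mathbf A\|_F$.

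The main obstacle is Step 1, specifically the precise EGNN equivariance claim in this two-protein setting. The routine algebra of Steps 2 and 3 is not the difficulty. I would first verify the claim by induction over layers. The induction hypothesis is that after layer $l$ the features are invariant and the coordinates are equivariant. The inductive step uses $\|\mathbf Q\mathbf u\|=\|\mathbf u\|$ and the fact that the translation $\mathbf b$ cancels in differences. Two further details need checking: the $k$-NN edge construction is invariant because it depends only on distances, and the pooling $\phi$ preserves invariance.
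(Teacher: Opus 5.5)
Your proof is correct and follows the same route as the paper's: invariant attention weights give $\mathbf y_k\mapsto \mathbf Q\mathbf y_k+\mathbf b$, after which $\hat{\boldsymbol\mu}$ moves with the points and $\hat{\boldsymbol\Sigma}$ transforms by conjugation. You are more careful than the paper in three places. The paper applies the transformation directly to the EGNN output coordinates $\mathbf z_i$, whereas you derive their equivariance from the encoder. The paper silently uses $\sum_i\alpha_i^k=1$ to pull out the translation, whereas you state it. You also flag that transforming the two proteins independently requires separate encoding, while a common transformation always works.
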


\begin{proof}
Consider a rigid-body transformation $(\mathbf R,\mathbf t)\in \text{SE(3)}$ applied to all residue coordinates:
\begin{equation}
\mathbf z_i'=\mathbf R\mathbf z_i+\mathbf t.
\end{equation}
We first prove that the virtual interface points are equivariant. 
Since the attention weights are computed from EGNN invariant features, they remain unchanged under SE(3). 
Thus, we have:
\begin{align}
\mathbf y_k'
&=\sum_i \alpha_i^k \mathbf z_i'
=\sum_i \alpha_i^k (\mathbf R\mathbf z_i+\mathbf t) \\
&=\mathbf R\sum_i \alpha_i^k\mathbf z_i + \mathbf t
=\mathbf R\mathbf y_k+\mathbf t.
\end{align}
Hence $\mathbf y_k$ are SE(3)-equivariant. 
Then, it is not hard to obtain that:
\begin{equation}
\hat{\boldsymbol\mu}'
=\frac{1}{K}\sum_k \mathbf y_k'
=\frac{1}{K}\sum_k (\mathbf R\mathbf y_k+\mathbf t)
=\mathbf R\hat{\boldsymbol\mu}+\mathbf t,
\end{equation}
and the centered coordinates satisfy:
\begin{equation}
\mathbf y_k'-\hat{\boldsymbol\mu}'=\mathbf R(\mathbf y_k-\hat{\boldsymbol\mu}),
\end{equation}
which yields:
\begin{align}
\hat{\boldsymbol\Sigma}'
&=\frac{1}{K}\sum_k (\mathbf y_k'-\hat{\boldsymbol\mu}')
(\mathbf y_k'-\hat{\boldsymbol\mu}')^\top \\
&=\mathbf R\hat{\boldsymbol\Sigma}\mathbf R^\top.
\end{align}
Therefore, $(\mathbf y_k,\hat{\boldsymbol\mu},\hat{\boldsymbol\Sigma})$ transform equivariantly under SE(3).
\end{proof}

\section{Experimental Details}\label{app:exp_details}

\subsection{Dataset Details}\label{app:dataset_details}
We use the molecular-glue-induced ternary-complex split derived from TernaryDB\footnote{\url{https://doi.org/10.5281/zenodo.15514874}}~\citep{deepternary}. 
Each sample contains two protein components, including a target protein and an E3 ligase, and one small-molecule ligand extracted from an experimentally resolved ternary complex. 
Protein lengths are calculated based on the number of resolved C$\alpha$ residues in the curated protein PDB files. 
Ligand descriptors are computed from the per-complex SDF files using RDKit. 
QED denotes the quantitative estimate of drug-likeness. Detailed statistics are provided in Tab.~\ref{tab:dataset_summary}.
For numerical descriptors, we report the median, with the minimum and maximum values. 

\begin{table}[H]
\centering
\caption{Summary statistics of the training and held-out molecular-glue test splits. Numerical descriptors are reported as \textbf{Median (Minimum, Maximum)}. 
}
\label{tab:dataset_summary}
\renewcommand{\arraystretch}{1.12}
\setlength{\tabcolsep}{4pt}
\begin{adjustbox}{max width=\linewidth}
\begin{tabular}{llcc}
\toprule
\textbf{Component} & \textbf{Statistic} & \textbf{Training set} & \textbf{Test set} \\
\midrule
\multirow{2}{*}{\textit{Complex}} & Number of ternary complexes & 21,481 & 94 \\
 & \textbf{Sequence-pair overlap with training set} & -- & \textbf{0/94} \\
\midrule
\multirow{2}{*}{\textit{Protein}} & Length of E3 ligase & 321 (3,~1765) & 227 (107,~826) \\
 & Length of target protein & 125 (3,~1759) & 76 (2,~512) \\
\midrule
\multirow{5}{*}{\textit{Molecular Glue}} & Number of heavy atoms & 32 (2,~100) & 25 (10,~69) \\
 & Molecular weight (Da) & 506.20 (26.04,~1464.07) & 385.85 (136.15,~980.31) \\
 & Number of rings & 1 (0,~24) & 3 (0,~13) \\
 & Number of aromatic rings & 0 (0,~7) & 0 (0,~5) \\
 & QED & 0.24 (0.02,~0.93) & 0.54 (0.12,~0.88) \\
\bottomrule
\end{tabular}
\end{adjustbox}
\end{table}

\subsection{Baseline Details}\label{app:baseline}
\noindent \textbf{Baselines}. 
Our framework jointly performs \emph{de novo} molecular glue generation and reconstructs the unbound target protein into the ternary-complex coordinate frame. 
However, no existing method evaluates both aspects simultaneously. 
Therefore, we select and compare representative baselines separately for each task. 
Overall, these baselines span a broad spectrum of representative methods:
\begin{itemize}[leftmargin=*,label={}]
    \item (i)~\emph{Molecular design method.} 
    To assess the quality of generated molecular glues, three structure-based drug design methods are considered: 
    \textbf{AR}\footnote{\url{https://github.com/luost26/3D-Generative-SBDD}}~\citep{ar}, 
    \textbf{TargetDiff}\footnote{\url{https://github.com/guanjq/targetdiff}}~\citep{targetdiff}, and 
    \textbf{PocketXMol}\footnote{\url{https://github.com/pengxingang/PocketXMol}}~\citep{pocketxmol}, 
    which perform \textit{de novo} small-molecule generation conditioned on protein binding pockets.

    \item (ii)~\emph{Ternary complex structure prediction method.} 
    \textbf{DeepTernary}\footnote{\url{https://github.com/youqingxiaozhua/DeepTernary}}~\citep{deepternary} 
    is the first model designed for ternary complex structure prediction. 
    We adopt it to dock the unbound protein back into the ternary complex, thereby evaluating the ability to model the complex structure. 

    \item (iii)~\emph{Protein-protein docking method.} 
    To further evaluate the rigid-body docking capability, we include two representative protein-protein docking approaches: 
    \textbf{EquiDock}\footnote{\url{https://github.com/octavian-ganea/equidock_public}}~\citep{equidock} and 
    \textbf{DiffDock-PP}\footnote{\url{https://github.com/ketatam/DiffDock-PP}}~\citep{diffdockpp}, 
    which predict the relative pose of interacting proteins from their unbound structures. 
\end{itemize}
All of the baselines are implemented with the official code. 

\subsection{Training Details}\label{app:train_details}

The proposed framework consists of two stages: interface estimation and interface-conditioned ternary flow generation. 
All experiments were conducted using the Adam optimizer with an initial learning rate of $5\times10^{-4}$.

\noindent \textbf{Interface Estimation Module}. 
The interface estimation model was trained for 1200 epochs with a batch size of 8 using distributed training on 8 GPUs. 
Node and edge embedding dimensions were set to 128 and 64, respectively. 
The SE(3)-equivariant encoder adopted a two-block IPA architecture with 8 attention heads, 8 query/key points, and 12 value points. 
A lightweight sequence transformer with 2 layers and 4 attention heads was further employed for sequence feature integration. 
The interface conditioner used a feature dimension of 128 and graph construction with 16 nearest neighbors.

\noindent \textbf{Interface-Conditioned Ternary Flow Network}. 
The ternary flow model was trained for 800 epochs with a batch size of 2 and gradient accumulation of 32, resulting in an effective batch size of approximately 64 samples per optimization step. 
The pretrained interface estimation module was frozen during this stage. 
The coordinate, atom-type, rotation, and translation objectives were jointly optimized with loss weights of $10.0$, $1.0$, $1.0$, and $0.008$, respectively. 
Gradient clipping with a maximum norm of 50 was applied to stabilize training.

For flow matching, translations were modeled with Euclidean conditional flow matching using Gaussian noise with $\sigma=1.0$, while rotations were modeled on $\mathrm{SO}(3)$ using Riemannian flow matching. 
Atom types were represented on the probability simplex with simplex scale constant $K=5.0$ and 25 atom categories, where atom identities were defined based on elemental type together with hybridization and aromaticity information. 
During sampling, the model performed 50 iterative denoising steps. 

\noindent \textbf{Dataset Split}. 
The ternary complex dataset was divided into training and validation subsets using a 95\%/5\% split. 
Unless otherwise specified, all experiments used a random seed of 42. 

Training the interface model takes approximately 2 days, training the ternary flow network requires about 5 days, and sampling a single complex takes around 40 seconds. 

\subsection{Evaluation Metric Details}\label{app:metric}
We evaluate the models from the following two perspectives:

(i)~\textbf{Molecular generation}. 
We assess the generated ligands with respect to binding and physicochemical properties. 
Binding affinity is estimated using AutoDock Vina~\citep{autodock}, including the \textbf{Vina score}, the minimized Vina score (\textbf{Vina min}) after local relaxation, and the redocking Vina score (\textbf{Vina dock}) that reflects the best achievable binding affinity. 
The high-affinity ratio (\textbf{Aff}) is defined as the proportion of generated molecules whose Vina dock surpasses that of the reference ligand for each target. 
To further examine chemical quality, we evaluate standard molecular property metrics, including quantitative estimation of drug-likeness (\textbf{QED}) and the synthetic accessibility (\textbf{SA}) score.

(ii)~\textbf{Ternary-complex reconstruction}. 
We evaluate the structural accuracy of the generated complex from both ligand and protein perspectives. 
Ligand pose accuracy is measured using the root-mean-square deviation (\textbf{RMSD}) between predicted and reference ligand coordinates. 
Protein docking accuracy is assessed using the \textbf{DockQ} score~\citep{dockq}, and we additionally report the success rate with DockQ greater than 0.23, which is commonly regarded as indicating acceptable docking quality. 

\noindent \textbf{Root Mean Square Deviation (RMSD)}. 
Ligand pose quality is assessed using the RMSD between the predicted ligand coordinates and the reference (ground truth) coordinates.

Let $\mathbf{x}_i \in \mathbb{R}^3$ denote the predicted coordinate of the $i$-th ligand atom and $\mathbf{x}_i^{\ast} \in \mathbb{R}^3$ denote the corresponding reference coordinate. 
Given $N$ ligand atoms, RMSD is defined as:
\begin{equation}
\mathrm{RMSD} =
\sqrt{
\frac{1}{N}
\sum_{i=1}^{N}
\left\|
\mathbf{x}_i - \mathbf{x}_i^{\ast}
\right\|_2^2
}.
\end{equation}
Lower RMSD values indicate higher agreement between the predicted ligand pose and the reference structure. 

\noindent \textbf{AutoDock Vina-based Binding Metrics.}
To evaluate the predicted protein-ligand binding quality of generated molecular glues, we report three AutoDock Vina-based scores: Vina score, Vina min, and Vina dock. 
Let $P$ denote the protein binding environment, $M$ denote a generated molecule, and $\hat{\mathbf{x}}$ denote its generated 3D conformation. 
We write $E_{\mathrm{Vina}}(P,M,\mathbf{x})$ as the AutoDock Vina scoring function, which approximates the protein-ligand binding free energy in kcal/mol. 
More negative values indicate stronger predicted binding affinity.

The \textbf{Vina score} is computed on the generated ligand pose without further pose optimization:
\begin{equation}
\mathrm{VinaScore}(M)
=
E_{\mathrm{Vina}}(P,M,\hat{\mathbf{x}}).
\end{equation}
This metric evaluates whether the model directly generates a chemically plausible and energetically favourable binding pose.

\textbf{Vina score} further performs local energy minimization around the generated pose before scoring:
\begin{equation}
\mathrm{VinaMin}(M)
=
\min_{\mathbf{x}\in \mathcal{N}(\hat{\mathbf{x}})}
E_{\mathrm{Vina}}(P,M,\mathbf{x}),
\end{equation}
where $\mathcal{N}(\hat{\mathbf{x}})$ denotes the local conformational neighbourhood explored during minimization. 
Compared with the Vina score, Vina min is less sensitive to small local geometric errors in the generated conformation.

\textbf{Vina dock} evaluates the molecule after a full re-docking procedure:
\begin{equation}
\mathrm{VinaDock}(M)
=
\min_{\mathbf{x}\in \Omega(P,M)}
E_{\mathrm{Vina}}(P,M,\mathbf{x}),
\end{equation}
where $\Omega(P,M)$ denotes the docking search space of possible ligand poses in the binding region. 
This metric estimates the best binding score achievable by the generated molecule under the docking protocol, rather than only evaluating the initially generated pose. 
For all three Vina-based metrics, lower values are better.

\noindent \textbf{Quantitative Estimate of Drug-likeness (QED).}
Drug-likeness is evaluated using the Quantitative Estimate of Drug-likeness (QED), which summarizes how similar a molecule is to known drug-like compounds. 
QED combines multiple molecular descriptors, including molecular weight, lipophilicity, topological polar surface area, hydrogen bond donors and acceptors, aromatic rings, rotatable bonds, and structural alerts. 
Given descriptor-specific desirability functions $d_k(M)\in[0,1]$ and weights $w_k$, QED is computed as a weighted geometric mean:
\begin{equation}
\mathrm{QED}(M)
=
\exp\left(
\frac{
\sum_{k=1}^{K} w_k \log d_k(M)
}{
\sum_{k=1}^{K} w_k
}
\right).
\end{equation}
QED ranges from 0 to 1, where larger values indicate more favourable drug-like properties.

\noindent \textbf{Synthetic Accessibility (SA).}
Synthetic accessibility measures how easy a generated molecule is expected to be synthesized. 
The commonly used raw synthetic accessibility score is based on fragment contributions and molecular complexity penalties:
\begin{equation}
\mathrm{SAscore}_{\mathrm{raw}}(M)
=
S_{\mathrm{frag}}(M)
+
S_{\mathrm{complexity}}(M)
+
S_{\mathrm{correction}}(M).
\end{equation}
Here, $S_{\mathrm{frag}}(M)$ reflects whether the molecule contains fragments frequently observed in known synthesized compounds, while $S_{\mathrm{complexity}}(M)$ penalizes structural features that make synthesis difficult, such as large rings, complex ring systems, stereochemical complexity, and large molecular size.

Since the raw SA score is usually defined such that lower values indicate easier synthesis, while our table reports SA with a larger-is-better direction, we use a normalized synthesizability score:
\begin{equation}
\mathrm{SA}(M)
=
\mathrm{clip}
\left(
\frac{10-\mathrm{SAscore}_{\mathrm{raw}}(M)}{9},
0,
1
\right).
\end{equation}
Under this convention, larger SA values indicate molecules that are predicted to be easier to synthesize.

\noindent \textbf{High Affinity Rate (Aff).}
Following common evaluation protocols in target-aware molecule generation, Aff measures the proportion of generated molecules whose predicted binding affinity is stronger than that of the reference ligand for the same target. 
Let $s_{t,j}^{\mathrm{dock}}$ denote the Vina dock score of the $j$-th generated molecule for target $t$, and let $s_t^{\mathrm{ref}}$ denote the corresponding score of the reference ligand. 
Since lower Vina scores indicate stronger predicted binding, Aff is defined as:
\begin{equation}
\mathrm{Aff}
=
\frac{1}{|\mathcal{T}|}
\sum_{t\in\mathcal{T}}
\frac{1}{K_t}
\sum_{j=1}^{K_t}
\mathbb{I}
\left[
s_{t,j}^{\mathrm{dock}} < s_t^{\mathrm{ref}}
\right],
\end{equation}
where $\mathcal{T}$ is the set of test targets, $K_t$ is the number of generated molecules for target $t$, and $\mathbb{I}[\cdot]$ is the indicator function. 
Higher Aff values indicate that a larger fraction of generated molecules are predicted to bind more strongly than the reference ligand. 

\noindent \textbf{DockQ.}
For protein–protein docking, model quality is evaluated using DockQ, a continuous score in the range $[0,1]$ that summarizes interface accuracy by combining three complementary aspects. 
(i) The fraction of native contacts ($F_{\text{nat}}$) measures how many residue–residue contacts observed in the reference complex are correctly reproduced in the prediction, reflecting interface contact recovery. 
(ii) The interface RMSD (iRMSD) quantifies the structural deviation of backbone atoms for residues located at the binding interface after superposition of the complexes, capturing local interface accuracy. 
(iii) The ligand RMSD (LRMSD) measures the backbone deviation of the ligand protein after aligning the receptor, reflecting the global placement of the binding partner. 
These three components are smoothly combined into a single score:
\begin{equation}
\mathrm{DockQ} = \frac{1}{3}
\left(
F_{\text{nat}} +
\frac{1}{1+\left(\frac{\mathrm{iRMSD}}{1.5}\right)^2} +
\frac{1}{1+\left(\frac{\mathrm{LRMSD}}{8.5}\right)^2}
\right).
\end{equation}

\subsection{Ellipsoid Interface Prediction Metric Details}\label{app:ellipsoid_metric}

Each interface is represented as a Gaussian ellipsoid 
$\mathcal{N}(\mu,\Sigma)$, where $\mu\in\mathbb{R}^3$ denotes the ellipsoid center and 
$\Sigma\in\mathbb{R}^{3\times 3}$ is a symmetric positive definite (SPD) matrix describing its orientation and spatial extent. 
Given a prediction $(\mu_p,\Sigma_p)$ and reference $(\mu_r,\Sigma_r)$, we evaluate complementary geometric and distributional discrepancies using the following metrics.

\textbf{Center-L2}.  
This metric measures the Euclidean distance between predicted and reference ellipsoid centers:
\begin{equation}
d_{\text{center}}=\|\mu_p-\mu_r\|_2.
\end{equation}

\textbf{LogDet Error}.  
Differences in ellipsoid scale are quantified by comparing the log-determinants of the covariance matrices:
\begin{equation}
d_{\log\det}= \left|\log\det(\Sigma_p)-\log\det(\Sigma_r)\right|.
\end{equation}

\textbf{Wasserstein-2 distance (W2 Dist)}.  
This geometry-aware distance jointly evaluates discrepancies in both the ellipsoid center and shape:
\begin{equation}
W_2^2
=\|\mu_p-\mu_r\|_2^2
+\mathrm{Tr}\!\left(
\Sigma_p+\Sigma_r
-2\left(\Sigma_r^{1/2}\Sigma_p\Sigma_r^{1/2}\right)^{1/2}
\right),
\end{equation}
and we report $W_2=\sqrt{W_2^2}$.

\textbf{Jensen--Shannon divergence (JSD)}. 
Distributional similarity between the predicted and reference Gaussian densities is assessed using JSD:
\begin{equation}
\mathrm{JSD}(P\|Q)
=\frac{1}{2}\mathrm{KL}(P\|M)+\frac{1}{2}\mathrm{KL}(Q\|M),
\quad
M=\frac{1}{2}(P+Q),
\end{equation}
where $P=\mathcal{N}(\mu_r,\Sigma_r)$ and $Q=\mathcal{N}(\mu_p,\Sigma_p)$. 
Because $M$ is a Gaussian mixture rather than a single Gaussian, JSD is estimated via Monte Carlo:
\begin{equation}
\widehat{\mathrm{JSD}}
=\frac{1}{2}\left[
\frac{1}{N}\sum_{x_i\sim P}\log\frac{p(x_i)}{m(x_i)}
+
\frac{1}{N}\sum_{y_j\sim Q}\log\frac{q(y_j)}{m(y_j)}
\right],
\end{equation}
where $m(\cdot)=\frac{1}{2}\big(p(\cdot)+q(\cdot)\big)$.

\section{Group Analysis}
We evaluate the structural quality of the generated ternary complexes from a structure prediction perspective. 
For each group, molecular glues sampled by {\ours} on the test set are combined with the sequences of the two proteins and fed into AlphaFold3~\citep{alphafold3} for complex structure prediction, after which pTM and ipTM confidence are computed. 
As shown in Fig.~\ref{fig:alphafold_group}, AlphaFold3 predicts structurally reasonable ternary complexes for most groups, providing additional evidence for the quality of the sampled molecular glues. 
An exception is observed for the \texttt{6TZ8} family, which corresponds to the immunosuppressant drug \texttt{FK506}~\citep{fk506}, while TriGlue also achieves a relatively low DockQ score of 0.13 for this group. 

\begin{figure}[H]
    \centering
    \includegraphics[width=0.70\linewidth]{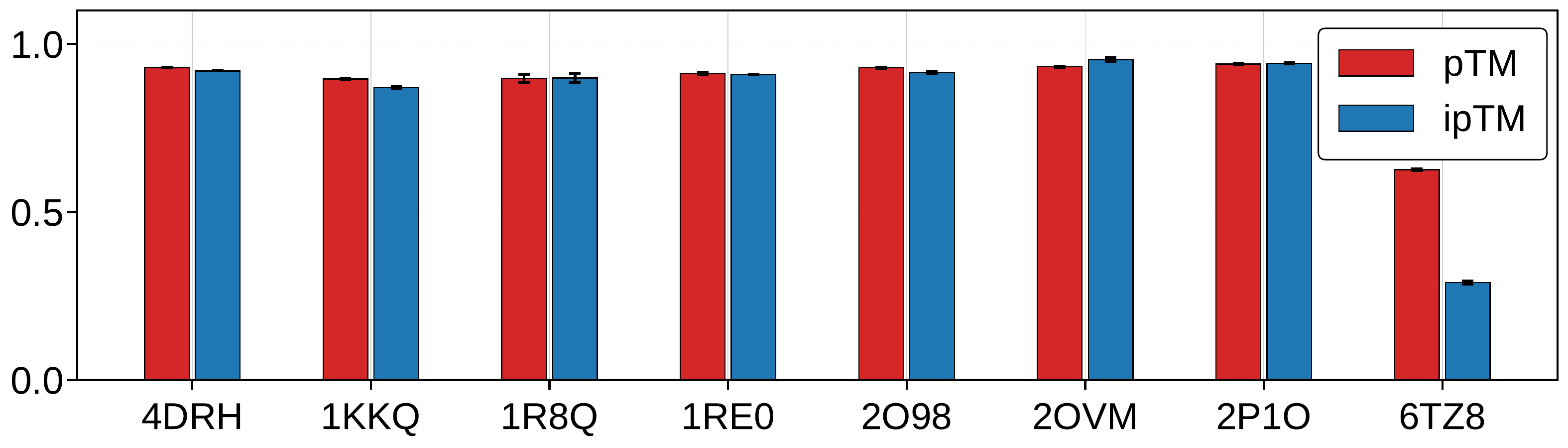}
    \caption{pTM and ipTM confidence by group.}
    \label{fig:alphafold_group}
\end{figure}



\section{Broader Impacts}\label{app:broader_impacts}
This work introduces a generative framework for molecular glue design, a modality with the potential to expand the druggable proteome and accelerate the discovery of therapies for diseases that remain difficult to treat using conventional small-molecule approaches. 
By enabling the computational exploration of ternary complex formation, our approach may reduce experimental trial-and-error, shorten early-stage drug discovery timelines, and lower research costs, which could ultimately improve access to novel therapeutics and benefit public health. 
Furthermore, the proposed biology-inspired generative modeling paradigm may stimulate interdisciplinary collaboration across machine learning, structural biology, and medicinal chemistry.

\newpage

\end{document}